\documentclass{article}


\usepackage[final]{neurips_2023}




\usepackage{subcaption}
\usepackage[utf8]{inputenc} 
\usepackage[T1]{fontenc}    
\usepackage[final,pagebackref=true]{hyperref}       
\usepackage{url}            
\usepackage{booktabs}       
\usepackage{amsfonts}       
\usepackage{nicefrac}       
\usepackage{microtype}      
\usepackage[rgb,dvipsnames]{xcolor}         

\usepackage[inline, shortlabels]{enumitem}

\usepackage{graphicx}
\usepackage{booktabs} 

\hypersetup{
	colorlinks=true,       
	linkcolor=Blue,        
	citecolor=Blue,        
	filecolor=magenta,     
	urlcolor=Blue         
}

\newcommand\blfootnote[1]{%
  \begingroup
  \renewcommand\thefootnote{}\footnote{#1}%
  \addtocounter{footnote}{-1}%
  \endgroup
}


\usepackage{url}            
\usepackage{booktabs}       
\usepackage{enumitem}
\usepackage{natbib}
\usepackage{amsfonts}       
\usepackage{nicefrac}       
\usepackage{microtype}      
\usepackage{todonotes}
\usepackage{amssymb,fge}
\usepackage{thm-restate}
\usepackage{wrapfig}
\usepackage{bbm}
\usepackage{mathrsfs}
\usepackage{subcaption}
\usepackage{soul}
\usepackage{array}
\usepackage{multirow}
\usepackage{tikz}
\usepackage{amssymb}
\usepackage{pifont}

\usetikzlibrary{fit,calc}
\usetikzlibrary{tikzmark}
\definecolor{Violet}{rgb}{0.56, 0.0, 1.0}

\usepackage{amsthm}
\usepackage{amsmath,amsfonts,bm}
\usepackage{algorithm}
\usepackage{algorithmic}

\makeatletter
\newtheorem*{rep@theorem}{\rep@title}
\newcommand{\newreptheorem}[2]{%
\newenvironment{rep#1}[1]{%
 \def\rep@title{#2 \ref{##1}}%
 \begin{rep@theorem}}%
 {\end{rep@theorem}}}
\makeatother

\newtheorem{definition}{Definition}[section]

\newreptheorem{theorem}{Theorem}


\usepackage[framemethod=TikZ]{mdframed}
\mdfdefinestyle{MyFrame}{%
    linecolor=black,
    outerlinewidth=.3pt,
    roundcorner=5pt,
    innertopmargin=1pt, 
    innerbottommargin=1pt, 
    innerrightmargin=1pt,
    innerleftmargin=1pt,
    backgroundcolor=black!0!white}

\mdfdefinestyle{MyFrame2}{%
    linecolor=white,
    outerlinewidth=1pt,
    roundcorner=2pt,
    innertopmargin=\baselineskip,
    innerbottommargin=\baselineskip,
    innerrightmargin=10pt,
    innerleftmargin=10pt,
    backgroundcolor=black!3!white}


   
\newcommand{\RNum}[1]{\uppercase\expandafter{\romannumeral #1\relax}}

\newcommand{\bx}{\mathbf{x}}

\newcommand{\vertiii}[1]{{\left\vert\kern-0.25ex\left\vert\kern-0.25ex\left\vert #1 
    \right\vert\kern-0.25ex\right\vert\kern-0.25ex\right\vert}}
\newcommand{\vertiiii}[1]{{\vert\kern-0.25ex\vert\kern-0.25ex\vert #1 
    \vert\kern-0.25ex\vert\kern-0.25ex\vert}}


\usepackage{mathtools}



\newcommand{\xhdr}[1]{{\noindent\bfseries #1}.}
\newcommand{\cut}[1]{}

\newcommand{\removelatexerror}{\let\@latex@error\@gobble}







\def\eqref#1{Eq.~\ref{#1}}









\def\1{\bm{1}}










\DeclareMathAlphabet{\mathsfit}{\encodingdefault}{\sfdefault}{m}{sl}
\SetMathAlphabet{\mathsfit}{bold}{\encodingdefault}{\sfdefault}{bx}{n}


\def\gD{{\mathcal{D}}}

\def\gL{{\mathcal{L}}}

\def\gN{{\mathcal{N}}}
\def\gO{{\mathcal{O}}}

\def\gQ{{\mathcal{Q}}}















\DeclareMathOperator{\Tr}{Tr}
\DeclareMathOperator{\Kl}{KL}

\usepackage[capitalize,noabbrev]{cleveref}

\title{Feature Likelihood Divergence: Evaluating the Generalization of Generative Models Using Samples}

%

\author{%
 Marco Jiralerspong\thanks{e-mail correspondence to \texttt{marco.jiralerspong@mila.quebec}}\\
 Université de Montréal and Mila\\
   \And
   Avishek (Joey) Bose \\
   McGill University and Mila \\
   \And
   Ian Gemp \\
   Google Deepmind\\
   \And 
   Chongli Qin \\
   Google Deepmind\\
   \And
   Yoram Bachrach \\
   Google Deepmind\\
   \And
   Gauthier Gidel\thanks{Canada Cifar AI Chair} \\
    Université de Montréal and Mila\\
}

\begin{document}

\maketitle

\begin{abstract}

The past few years have seen impressive progress in the development of deep generative models capable of producing high-dimensional, complex, and photo-realistic data. However, current methods for evaluating such models remain incomplete: standard likelihood-based metrics do not always apply and rarely correlate with perceptual fidelity, while sample-based metrics, such as FID, are insensitive to overfitting, i.e., inability to generalize beyond the training set. To address these limitations, we propose a new metric called the Feature Likelihood Divergence (FLD), a parametric sample-based metric that uses density estimation to provide a comprehensive trichotomic evaluation accounting for novelty (i.e., different from the training samples), fidelity, and diversity of generated samples.  We empirically demonstrate the ability of FLD to identify overfitting problem cases, even when previously proposed metrics fail. We also extensively evaluate FLD on various image datasets and model classes, demonstrating its ability to match intuitions of previous metrics like FID while offering a more comprehensive evaluation of generative models. Code is available at \href{https://github.com/marcojira/FLD}{https://github.com/marcojira/fld}.
\end{abstract}

\section{Introduction}
\vspace{-5pt}

Generative modeling is one of the fastest-growing areas of deep learning, with success stories spanning the artificial intelligence spectrum \citep{karras2020analyzing, brown2020language, wu2021protein,rombach2022high}. Despite the growth of applications---and unlike supervised or reinforcement
learning---there is a lack of a clear consensus on an evaluation protocol in high-dimensional data regimes in which these models excel. In particular, the standard metric of evaluating log-likelihood of held-out test data~\citep{bishop2006pattern,goodfellow2016deep,pml1Book} fails to provide a meaningful evaluation signal due to its large variability between repetitions~\citep{nowozin2016f} and
lack of direct correlation with sample fidelity~\citep{theis2015note}. 

\blfootnote{\hspace{-4pt}$^{\ddagger}$Our metric (FLD) was named Feature Likelihood Score (FLS) in the previous versions of this paper.}

\looseness=-1
Departing from pure likelihood-based evaluation, sample-based metrics offer appealing benefits such as being able to evaluate any generative model family via their generated samples. Furthermore, sample-based metrics such as Inception score (IS)~\citep{salimans2016improved}, Fréchet Inception distance (FID)~\citep{heusel2017gans}, precision, and recall~\citep{lucic2018gans,sajjadi2018assessing} have been shown to correlate with sample quality, i.e. the perceptual visual quality of a generated sample, and the perceptual sample diversity. Despite being the current defacto gold standard, sample-based metrics miss important facets of evaluation \citep{xu2018empirical,esteban2017real,meehan2020non}. For example, on CIFAR10, the current standard FID computation uses 50k generated samples and 50k training samples from the dataset, a practice that does not take into account overfitting. Consider the worst-case scenario of a model called \textit{copycat} which simply outputs copies of the training samples. Using the standard evaluation protocol, such a model would obtain a \emph{FID of 0} (as we compare distances of two identical Gaussians)---a perfect score for a useless model. Instead, we could try looking at the FID of the copycat relative to the test set. While this improves the situation somewhat (see Fig. \ref{fig:test_fid}), copycat still obtains \emph{better than SOTA test FID}, demonstrating that aiming for the lowest FID is highly vulnerable to overfitting.

\looseness=-1
While old generative models struggled to produce good quality samples, recent models have demonstrated the ability to memorize~\citep{somepalli2022diffusion} and overfit~\citep{yazici2020empirical}. With the widespread adoption of deep generative models in high-stakes and industrial production environments, important concerns regarding data privacy~\citep{carlini2023extracting,arora2018gans,hitaj2017deep} should be raised. For instance, in safety-critical application domains such as precision medicine, data leakage in the form of memorization is unacceptable and severely limits the adoption of generative models---a few of which have been empirically shown to be guilty of ``digital forgery'' \citep{somepalli2022diffusion}. 
These concerns highlight the limitations with current evaluation metrics:
\begin{center}
\begin{minipage}{.9 \textwidth}
\vspace{-3pt}
\centering
There are currently no  
\emph{sample-based evaluation metrics} accounting for the trichotomy between sample \emph{fidelity, diversity, and novelty} (Figure~\ref{fig:headline_plot}). 
\vspace{-3pt}
\end{minipage}
\end{center}
\looseness=-1
We believe this trichotomy encompasses what is required for a generative model to have the desired generalization properties, i.e., a ``good'' generative model should generate samples that are diverse and perceptually indistinguishable from the training data distribution, but \emph{at the same time}, different from that training data. In other words, the more generated data looks like \emph{unseen} test samples, the better.
Additionally, by assessing the novelty of generated samples in relation to the training set, we can better identify potential privacy and copyright risks.

%


\xhdr{Main Contribution}
We propose the feature likelihood divergence (FLD): a novel sample-based metric that captures sample fidelity, diversity, and novelty. FLD enjoys the same scalability as popular sample-based metrics such as FID and IS but crucially also assesses sample novelty, overfitting, and memorization. Evaluation using FLD has many consequential benefits:
\begin{enumerate}[noitemsep,topsep=0pt,parsep=0pt,partopsep=0pt, leftmargin=*]
    \item \textbf{Explainability:} Samples that contribute the most (and the least) to the performance are identified.
    \item \textbf{Diagnosing Overfitting:} As overfitting begins (i.e., copying of the training set) FLD identifies the copies and reports an inferior value \emph{despite} no drop in sample fidelity and diversity.
    \item \textbf{Holistic Evaluation:} FLD simultaneously is the only metric proposed in the literature that simultaneously evaluates the fidelity, diversity, and novelty of the samples (Fig.~\ref{fig:headline_plot}). 
    \item \textbf{Universal Applicability:} FLD applies to all generative models, including  VAEs, Normalizing Flows, GANs, and Diffusion models with minimal overhead as it is computed only using samples. 
    \item \textbf{Flexibility:} Because of its connection with likelihood, FLD can be naturally extended to conditional and multi-modal generative modeling.
\end{enumerate}

\begin{figure}
\centering
\begin{minipage}[t]{.48\textwidth}
  \centering
  \includegraphics[width=.9\linewidth]{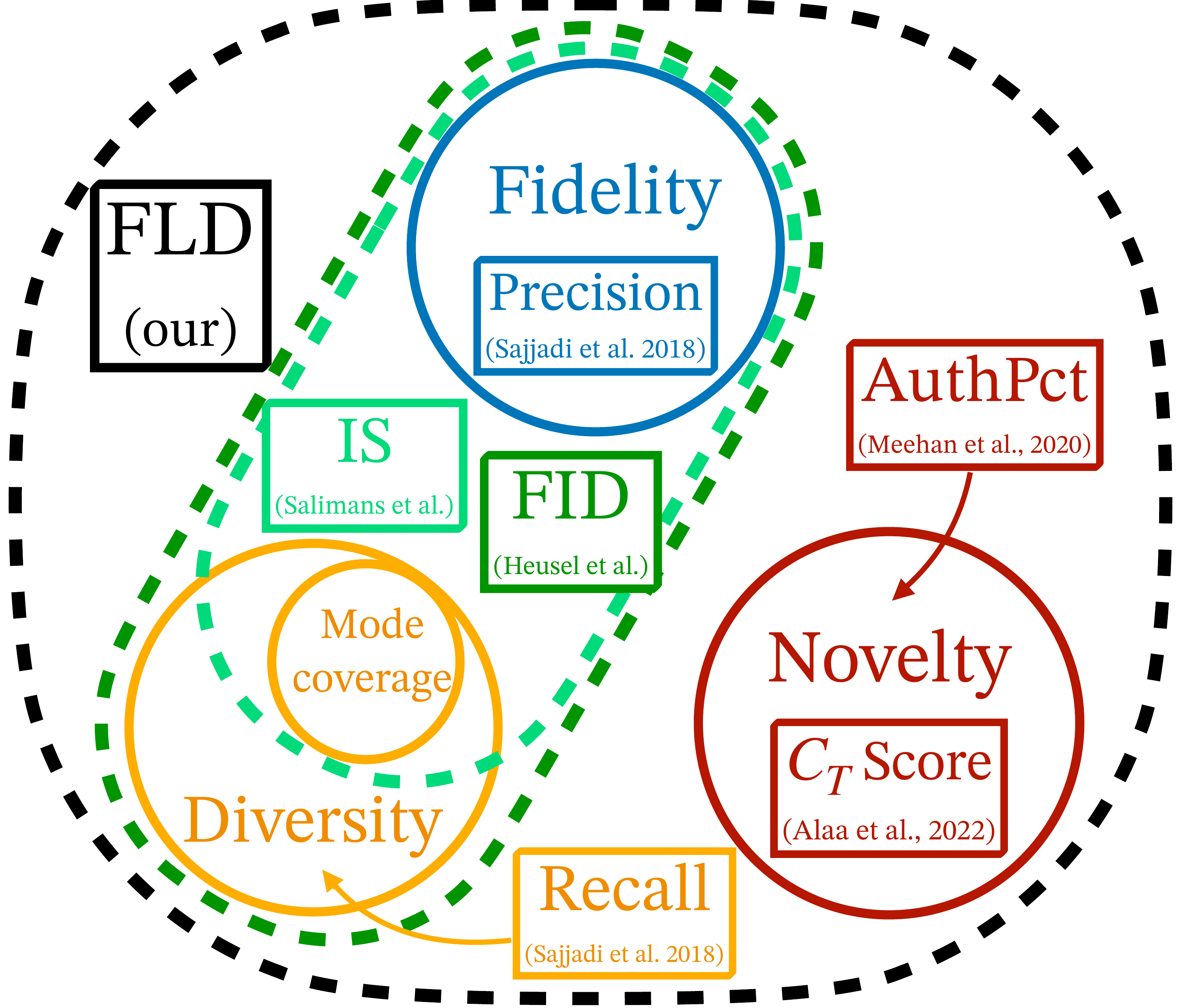}
  \captionof{figure}{
  \small
  The generative model evaluation ``trichotomy'': {\color{RoyalBlue}fidelity}, {\color{YellowOrange}diversity} and {\color{Maroon}novelty}. Each metric maps to a color delimiting its criteria for evaluation.}
  \label{fig:headline_plot}
\end{minipage}
\;
\begin{minipage}[t]{.48\textwidth}
  \centering
  \includegraphics[width=.9\linewidth]{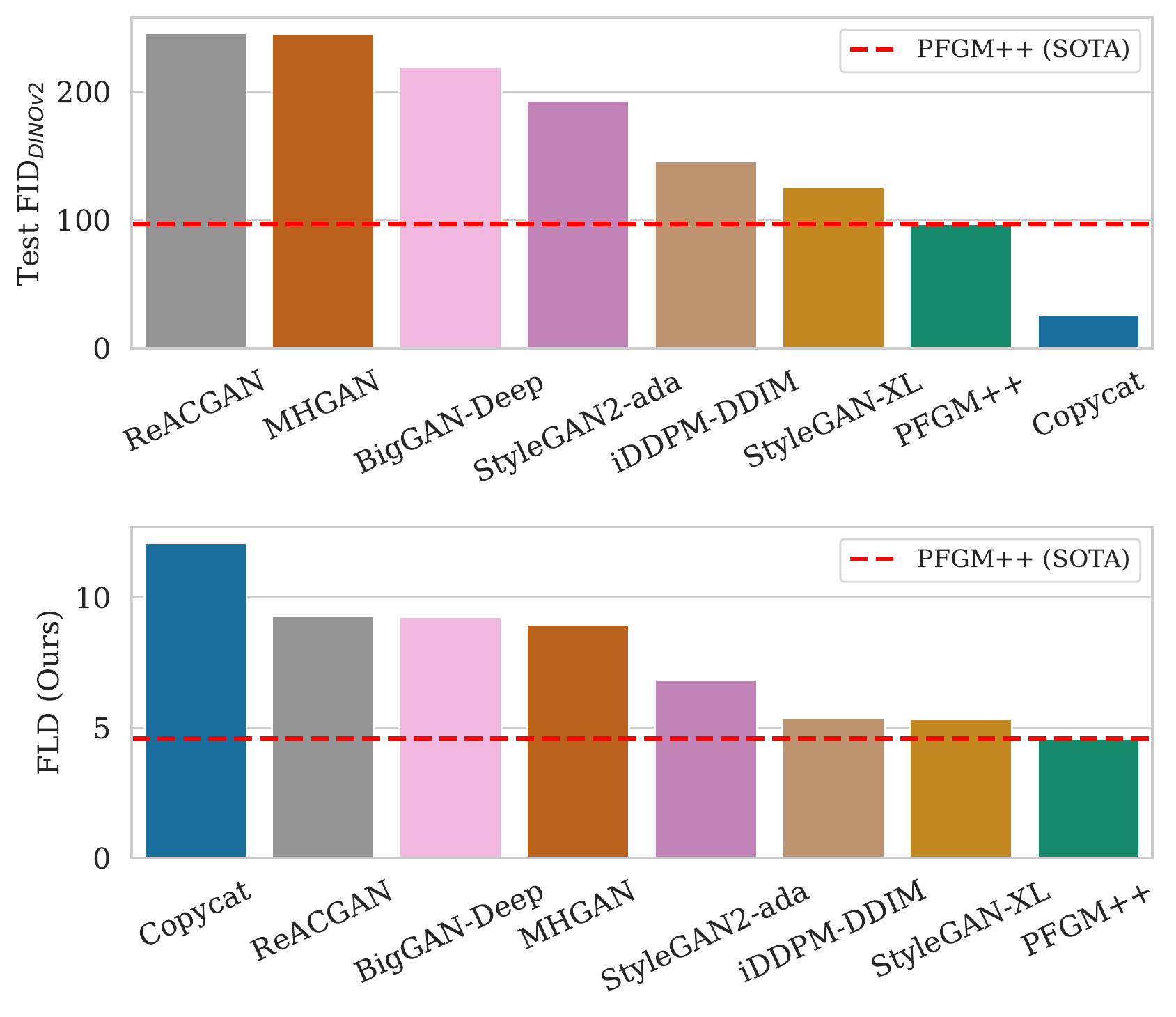}
  \captionof{figure}{
  \small Test FID/FLD of various models on CIFAR10. FID values are higher than usual as we use the DINOv2 feature space with the test set as reference (10k samples) instead of the usual 50k.}
  \label{fig:test_fid}
\end{minipage}
\vspace{-15pt}
\end{figure}


\looseness=-1
Intuitively, FLD achieves these goals by first mapping samples to a perceptually meaningful feature space such as a pre-trained Inception-v3~\citep{szegedy2016rethinking} or DINOv2 ~\citep{oquab2023dinov2}. Then, FLD is derived from the likelihood evaluation protocol that assesses the generalization performance of generative models in a similar manner to supervised learning setups. As most models lack explicit densities, we model the density of the generative model in our chosen feature space by using a mixture of isotropic Gaussians (MoG), whose means are the mapped features of the generated samples. We then fit the variances of the Gaussians to the train set in such a way that memorized samples obtain vanishingly small variances and thus worsen the density estimation of the MoG. Finally, we use the MoG and estimate the perceptual likelihood of some held-out test set.

\section{Background and Related Work}
\vspace{-5pt}
Given a training dataset $\mathcal{D}_{\text{train}} = \{\bx_i\}^n_{i=1}$ drawn from a distribution $p_{\text{data}}$, one of the key objectives of generative modeling is to train a parametric model $g$ that is able to generate novel synthetic yet high-quality samples---i.e., the distribution $p_g$ induced by the generator is close to $p_{\text{data}}$.\footnote{By close we mean either a divergence between distributions (e.g. KL, JSD) or a distance like Wasserstein.} 




\xhdr{Likelihood Evaluation}
The most common metric, and perhaps most natural, is the negative log-likelihood (NLL) of the test set, whenever it is easily computable. While appealing theoretically, generative models typically do not provide a density (e.g. GANs) or it is only possible to compute a lower bound of the test NLL (e.g. VAEs, continuous diffusion models, etc.) \citep{burda2015importance, song2021maximum,huang2021variational}. Even when possible, NLL-based evaluation suffers from a variety of pitfalls in high dimensions ~\citep{theis2015note,nowozin2016f} and may often not correlate with higher sample quality \citep{nalisnick2018deep,le2021perfect}. Indeed many practitioners have empirically witnessed phenomena such as mode-dropping, mode-collapse, and overfitting \citep{yazici2020empirical}, all of which are not easily captured simply through the NLL.

\xhdr{Sample-based Metrics} 
\looseness=-1
As all deep generative models are capable of producing samples, an effective way to evaluate these models is via their samples. Such a strategy has the benefit of bypassing the need to compute the exact model density of a sample point---allowing for a unified evaluation setting. More precisely, given $\gD_{\text{gen}} = \{ \bx^{\text{gen}} \}_{i=1}^m$ generated samples, where each $\bx^{\text{gen}} \sim p_g$ and $\gD_{\text{test}} = \{ \bx^{\text{test}} \}_{i=1}^n$ drawn from $p_{\text{data}}$,  the goal is to evaluate how ``good'' the generated samples are with respect to the real data distribution. 
Historically, sample-based metrics for evaluating deep generative models have been based on two ideas: 
\begin{enumerate*}[series = tobecont, itemjoin = \;\;]
\item using an Inception network \citep{szegedy2016rethinking} backbone $ \varphi$ as a feature extractor to 
\item compute a notion of distance (or similarity) between the generated and the real distribution. 
\end{enumerate*}
The Inception Score (IS) and the Fréchet Inception Distance (FID) are the two most popular examples and can be computed as follows:
\begin{align}
    &\text{IS:} \quad e^{\frac{1}{m} \sum_{i=1}^m \Kl(p_\varphi(y| \bx^{\text{gen}}_i)||p_{d}(y))}\,, \qquad 
    \text{FID:}\quad  \|\mu_g-\mu_p\|^2
    + \Tr(\Sigma_g + \Sigma_p - 2(\Sigma_g\Sigma_p)^{1/2}) \notag
\end{align}
where $p_\varphi(y|x)$ is the probability of each class given by the Inception network $\varphi$, $p_d(y)$ is the ratio of each class in the real data, $\mu_g := \frac{1}{m}\sum_{i=1}^m \varphi(\bx^{\text{gen}}_i), \mu_p := \frac{1}{n} \sum_{i=1}^n \varphi(\bx^{\text{test}}_i)$ are the empirical means of each distribution, and $\Sigma_g:= \frac{1}{m} \sum_{i=1}^m (\bx^{\text{gen}}_i - \mu_g)(\bx^{\text{gen}}_i -\mu_g)^\top, \Sigma_p := \frac{1}{n}\sum_{i=1}^n (\bx^{\text{test}}_i - \mu_p)(\bx^{\text{test}}_i-\mu_p)^\top$ are the empirical covariances.

\looseness=-1
The popularity of IS and FID as metrics for generative models is motivated by their correlation with perceptual quality, diversity, and ease of use. 
More recently, other metrics such as KID \citep{binkowski2018demystifying} (an unbiased version of FID) and precision/recall (which disentangles sample quality and distribution coverage)~\citep{sajjadi2018assessing} have added nuance to generative model evaluation.


\looseness=-1




\looseness=-1
\xhdr{Overfitting Evaluation}
Several approaches seek to provide metrics to detect overfitting and can be categorized based on whether one can extract an exact likelihood \citep{van2021memorization} or a lower bound to it via annealed importance sampling \citep{wu2016quantitative}. 
For GANs, popular approaches include training an additional discriminator in a Wasserstein GAN \citep{adlam2019investigating} and adding a memorization score to the FID \citep{bai2021training}.
Alternate approaches include finding real data samples that are closest to generated samples via membership attacks \citep{liu2018generative, webster2019detecting}. Non-parametric tests have also been employed to detect memorization or exact data copying in generative models \citep{xu2018empirical, esteban2017real, meehan2020non}. Parametric approaches to detect data copying have also been explored such as using neural network divergences \citep{gulrajani2020towards} or using latent recovery \citep{webster2019detecting}. Finally, the $C_T$ \citep{meehan2020non} test statistic and \citet{alaa2022faithful} proposes a multi-faceted metric with a binary sample-wise test to determine whether a sample is authentic (i.e., overfit). 

\section{Feature Likelihood Divergence}
\label{sec:FLD}
\vspace{-5pt}
\looseness=-1
We now introduce our Feature Likelihood Score (FLD) which is predicated on the belief that a proper evaluation measure for generative models should go beyond sample quality and also inform practitioners of the generalization capabilities of their trained models. While previous sample-based methods have foregone density estimation in favor of computing distances between sample statistics, we seek to bring back a likelihood-based approach to evaluating generative models. To do so, we first propose our method for fitting a mixture of Gaussians (MoGs) to estimate the \emph{perceptual} density of high-dimensional samples in a way that accounts for \emph{overfitting}. Specifically, our method aims at attributing 1) a good NLL to high-quality, non-overfit images and 2) a poor NLL in cases of overfitting. 

\looseness=-1
Intuitively, we say a generative model is overfitting if on average the distribution of generated samples is closer to the training set than the test set in feature space. 
We seek to characterize this exact behavior with FLD by a MoG where the variance parameter $\sigma^2$ of each Gaussian approaches zero around generated samples that are responsible for overfitting. In section \ref{sec:overfitting_mog} we deepen this intuition by outlining how the MoG used in FLD can be tuned to assess the perceptual likelihood of samples while punishing memorized samples, before giving a precise definition for memorization and overfitting under FLD in section \ref{sec:detecting_overfitting_and_memorizatio}.



\subsection{Overfitting Mixtures of Gaussians}
\label{sec:overfitting_mog}
\looseness=-1
\begin{figure*}[t]
\vspace{-1cm}
    \centering
\includegraphics[width=\linewidth]{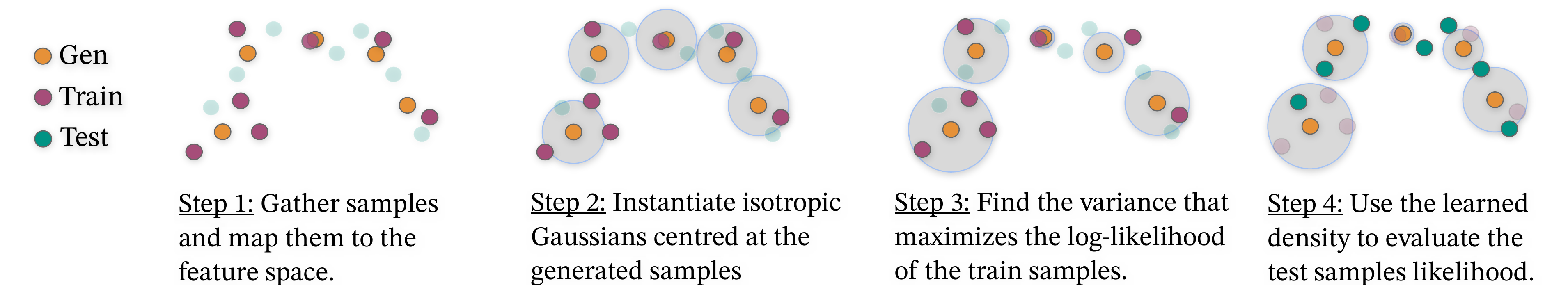}
    \caption{\small
    Steps involved in our overfit mixture of Gaussians illustrated on a 2D example }
    \label{fig:procedure}
    \vspace{-10pt}
\end{figure*}

\looseness=-1
Our method consists of a simple sample-based density estimator amenable to a variety of data domains inspired by a traditional mixture of Gaussians (MoG) density estimator with a few key distinctions. Figure \ref{fig:procedure} summarizes the $4$ key steps in computing FLD using our MoG density estimator (also detailed in Algorithm \ref{alg:FLD} in Appendix~\S\ref{app:algo_for_fld}) which we now describe below. While it is indeed possible to train any other density model, a MoG offers a favorable tradeoff in being simple to use---while still being a universal density estimator \citep{nguyen2020approximation}---and enjoying efficient scalability to large datasets. 

\looseness=-1
\xhdr{Step 1: Map to the feature space}
The first change we make is to use some map $\varphi$ to map inputs to some perceptually meaningful feature space. Natural choices for this include the representation space of Inception-v3 and DINOv2. While still high-dimensional, we ensure that a larger proportion of dimensions are useful and that the resulting $\ell_2$ distances between images are more meaningful.

\looseness=-1
\xhdr{Step 2: Model the density using a MoG} As in kernel density estimation (KDE), to estimate a density from some set of points $ 
\gD_{\text{gen}} = \{\bx^{\text{gen}}_j\}_{j=1}^m$ we center an isotropic Gaussian around each point---i.e., the mean of the Gaussian is the coordinates of the point. This means that $j$-th data point has a Gaussian $\mathcal{N}( \varphi(\bx_j^{\text{gen}}),\,\sigma_j^2 I_d)$. 
Then, to compute the likelihood of a new point $\bx$, we simply calculate the mean likelihood assigned to that point by all Gaussians in the mixture:
\begin{equation}
p_\sigma(\bx| \gD_{\text{gen}}) := \frac{1}{m} \sum_{j=1}^m \mathcal{N}( \varphi(\bx) | \varphi(\bx_j^{\text{gen}}),\,\sigma_j^2 I_d)
\label{eq:MoG_density} = \frac{1}{m} \sum_{j=1}^m \frac{1}{(\sqrt{2\pi}\sigma_j)^{d}} \exp \Big({\tfrac{-||\varphi(\bx^{\text{gen}}_j)-\varphi(\bx)||^2}{2 \sigma_j^2}}\Big) 
\end{equation}
with the convention that $\mathcal{N}( \varphi(\bx) | \varphi(\bx^{\text{gen}}),\,0_d)$ is a dirac at $\varphi(\bx^{\text{gen}})$. Henceforth, we denote this MoG estimator which has fixed centers initialized to a dataset (e.g. train set, generated set) as $\gN(\varphi(\gD); \Sigma)$, where $\Sigma$ is a diagonal matrix of bandwidths parameters---i.e. $\mathbf{\sigma}^2I$, where $\mathbf{\sigma}^2$ is a vector.

\looseness=-1
\xhdr{Step 3: Use the train set to select $\sigma^2_j$}
An important question in kernel density estimation is selecting an appropriate bandwidth $\sigma_j^2$. Overwhelmingly, a single bandwidth is selected which can either be derived statistically or by minimizing some loss through cross validation~\citep{murphy2012machine}. We depart from this single bandwidth philosophy in favor of separate $\sigma_j^2$ values for each Gaussian. To select $\sigma_j^2$, instead of performing standard cross-validation on samples from $p_g$, we fit the bandwidths using a subset of training examples $\{\varphi(\bx_{i}^{\text{train}})\}_{i=1}^n$ by minimizing their negative log-likelihood (NLL).  Specifically, we solve the following optimization problem:
\begin{equation}
\hat \sigma^2 \in \arg\max_{\mathbf{\sigma^2}} 
\frac{1}{n}\sum_{i=1}^{n} \log \Bigg( \frac{1}{m}\sum_{j=1}^{m} 
 \frac{1}{(\sqrt{2\pi}\sigma_j)^{d}} \exp \Big({\tfrac{-||\varphi(\bx^{\text{gen}}_j)-\varphi(\bx^{\text{train}}_i)||^2}{2 \sigma_j^2}}\Big) + \mathcal{L}_i \Bigg)
 \label{eq:cross_val}
\end{equation}
\looseness=-1
where $\mathcal{L}_i$ is a base likelihood given to each sample (see Appendix~\S\ref{app:method_details} for details). In particular, by centering each Gaussian on each $\bx^{\text{gen}}_j \in \gD_{\text{gen}}$ and fitting each $\sigma^2_j$ to the train set, we aim to have memorized samples obtain an overly small $\sigma^2_j$, worsening the quality of the MoG estimator (and thus penalizing memorization). The following proposition (proof in \S\ref{app:proof}) formalizes this intuition.
\begin{restatable}{proposition}{propx}
\label{prop1} Let $D_{ij} := ||\varphi(\bx^{\text{gen}}_j)-\varphi(\bx^{\text{train}}_i)||^2$ be the distance between a generated sample and a train sample. Assume $\forall i, j: D_{ij} \leq \hat{D}$ with $\delta_j := \min_i D_{ij}$. Then, for any $l \in \{1,\ldots,m\}$, we have that $\hat \sigma_l = O(\delta_l)$ where $\hat \sigma^2$ is a solution of~\eqref{eq:cross_val}.
\end{restatable}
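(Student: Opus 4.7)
The plan is to characterize $\hat\sigma_l$ through the first-order stationarity condition of the objective in~\eqref{eq:cross_val} and then to control the resulting weighted average of training-to-generated distances by the minimum distance $\delta_l$. It is convenient to write the objective as $J(\sigma)=\frac{1}{n}\sum_i \log(f_i(\sigma)+\mathcal{L}_i)$, where $f_i(\sigma)=\frac{1}{m}\sum_j g_j(\sigma_j;D_{ij})$ and $g_j(\sigma;D)=(2\pi)^{-d/2}\sigma^{-d}\exp(-D/(2\sigma^2))$. Since $\sigma_l$ enters only the $l$-th mixture component at each training point, my strategy is to differentiate in $\sigma_l$, set the derivative to zero, and solve.

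A direct computation gives $\partial_\sigma g(\sigma;D)=\sigma^{-1}(D/\sigma^2-d)\,g(\sigma;D)$, so the stationarity equation $\partial_{\sigma_l}J\big|_{\hat\sigma}=0$ reduces, after factoring out the common $\hat\sigma_l^{-1}/(nm)$, to
$$\sum_{i=1}^n \alpha_i\Big(\frac{D_{il}}{\hat\sigma_l^{\,2}}-d\Big)=0,\qquad \alpha_i \;:=\; \frac{g_l(\hat\sigma_l;D_{il})}{f_i(\hat\sigma)+\mathcal{L}_i}\;>\;0.$$
Rearranging immediately yields the identity
$$\hat\sigma_l^{\,2}\;=\;\frac{1}{d}\cdot\frac{\sum_i \alpha_i D_{il}}{\sum_i \alpha_i},$$
so $d\hat\sigma_l^{\,2}$ is a convex combination of the distances $\{D_{il}\}_{i=1}^n$. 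In particular the trivial lower bound $D_{il}\ge \delta_l$ gives $\hat\sigma_l^{\,2}\ge \delta_l/d$, and the uniform upper bound $D_{il}\le \hat D$ gives $\hat\sigma_l^{\,2}\le \hat D/d$.

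The remaining and main step is to tighten the upper bound on the convex combination to $O(\delta_l)$. Because $\alpha_i\propto \exp(-D_{il}/(2\hat\sigma_l^{\,2}))/(f_i(\hat\sigma)+\mathcal{L}_i)$, the weights decay exponentially in $D_{il}$, so once $\hat\sigma_l^{\,2}$ is of order $\delta_l$, the weight on the nearest-neighbour index $i^\star$ (with $D_{i^\star l}=\delta_l$) dominates the contributions of far-away indices. I would formalize this by partitioning $\{1,\dots,n\}$ into the ``near'' indices $\{D_{il}\le c\,\delta_l\}$ and the ``far'' indices $\{D_{il}>c\,\delta_l\}$ for a suitable constant $c>1$, and then showing, using the uniform bound $D_{il}\le \hat D$ together with a uniform positive lower bound on the base likelihoods $\mathcal{L}_i\ge \mathcal{L}_{\min}$, that the far indices contribute only an $\exp(-\Theta((c-1)\delta_l/\hat\sigma_l^{\,2}))$ fraction of the normalizer $\sum_i \alpha_i$.

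The main obstacle is a self-consistency issue: the concentration estimate on the weights $\alpha_i$ itself depends on the unknown $\hat\sigma_l^{\,2}$. I would resolve this by a two-step bootstrap: first use the convex-combination identity together with the coarse bound $\hat\sigma_l^{\,2}\le \hat D/d$ to rule out any stationary point with $\hat\sigma_l^{\,2}\gg \delta_l$ (in that regime the exponential weight on $i^\star$ already dominates, forcing the right-hand side of the identity back down to order $\delta_l$), and then re-feed this rough control into the exponential weights to sharpen the estimate to $\hat\sigma_l=O(\delta_l)$, with a constant depending only on $d$, the uniform bound $\hat D$, and $\mathcal{L}_{\min}$.
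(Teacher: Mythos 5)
Your route — characterizing $\hat\sigma_l$ through the first-order stationarity condition, deriving the convex-combination identity $d\hat\sigma_l^2 = \sum_i\alpha_i D_{il}/\sum_i\alpha_i$, and then bootstrapping a concentration bound on the weights $\alpha_i$ — is genuinely different from the paper's. The paper never touches the gradient: it sandwiches a rescaled version of the log-objective between the decoupled surrogate $g(\sigma) := \sum_i \max_j\bigl(-d\log\sigma_j - D_{ij}^2/(2\sigma_j^2)\bigr)$ and $g(\sigma)+n\log m$, characterizes the maximizer of $g$ directly via a case analysis on how many training indices are ``assigned'' to the $l$-th component, and transfers the conclusion to the maximizer of the true objective by a global comparison against a specific inflated candidate $\tilde\sigma$. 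The max-over-$j$ is the crucial ingredient: it makes the assignment of training points to components explicit, so that $\sigma_l$ is tuned only by the points it is responsible for, and the cross-component coupling disappears from the analysis.

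Your stationarity identity is correct, but the argument has a real gap, and it is the bootstrap. First, the weights $\alpha_i$ carry the factor $1/(f_i(\hat\sigma)+\mathcal{L}_i)$, which couples $\sigma_l$ to every other $\sigma_j$ through $f_i$; nothing in your sketch controls $f_{i^*}$, so another component can sit arbitrarily close to $\bx^{\text{train}}_{i^*}$, blow up $f_{i^*}$, and deflate $\alpha_{i^*}$ — the near weight need not dominate. Second, with only the coarse bound $\hat\sigma_l^2 \le \hat D/d$, the exponential ratio $\exp\bigl(-(D_{il}-\delta_l)/(2\hat\sigma_l^2)\bigr)$ for a far index with $D_{il}=c\delta_l$ is $\exp\bigl(-\Theta((c-1)\delta_l d/\hat D)\bigr)$, which tends to $1$ as $\delta_l\to 0$ — precisely the regime of interest — so the first bootstrap step gives no concentration; you need $\hat\sigma_l^2 = O(\delta_l)$ already to get it, which is the circularity you flag but do not resolve. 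Third, stationarity is shared by every critical point (saddles and local minima included), and you never inject the global-optimality information that distinguishes the argmax; the paper's sandwich-and-perturbation argument is exactly the device that does. Finally, a small but non-cosmetic issue inherited from the statement itself: $D_{ij}$ is defined there as a squared norm, yet the paper's proof carries $D_{ij}^2$ in the exponent. Under the squared convention your identity yields $\hat\sigma_l^2 = O(\delta_l)$, i.e.\ $\hat\sigma_l = O(\sqrt{\delta_l})$, not the claimed $O(\delta_l)$; the result only reads correctly with the unsquared convention that the paper's proof actually uses.
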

\looseness=-1
Proposition \ref{prop1} implies that each element of the training set that has been memorized induces a Dirac in the MoG density~\eqref{eq:MoG_density}. Thus, one can identify copies of training samples with the learned density. More generally, if one of the generated samples is unreasonably close to a training sample, its associated $\sigma^2$ will be very small as this maximizes the likelihood of the training sample. We illustrate this phenomenon with the Two-Moons dataset~\citep{pedregosa2011scikit} in Figure~\ref{fig:density_fit}. Note that since this dataset is low-dimensional, we do not need to use a feature extractor (Step 1). In Figure~\ref{fig:density_fit} we can see that the more approximate copies of the training set appear in the generated set, the more the estimated density (using~\eqref{eq:cross_val}) contains high values around approximate copies of the training set. As such, overfitted generated samples yield an overfitted MoG that does not model the distribution of real data $p_{\text{data}}$ and will yield poor (i.e., low) log-likelihood on the test set $\gD_{\text{test}}$. 
\looseness=-1
\begin{figure*}[ht]
    \centering
    \includegraphics[width=\linewidth]{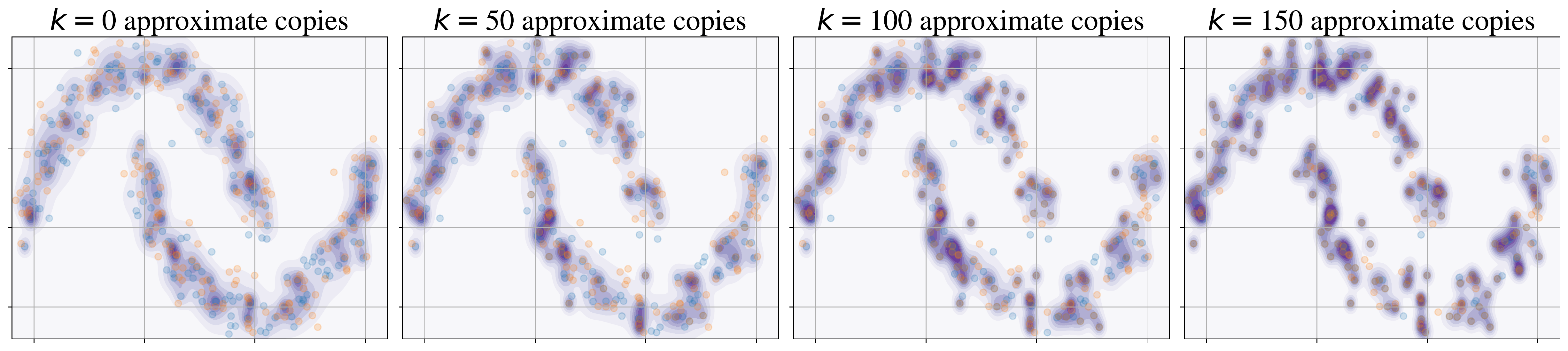}
    \vspace{-4mm}
    \caption{
    \small 
    Estimated density (in {\color{Violet}purple}) of the generated distribution using an MoG centered at the generated samples $\bx^{\text{gen}}_i$ (in {\color{blue}blue})~\eqref{eq:MoG_density}. The selection of $\sigma_i^2$ is done via~\eqref{eq:cross_val}. The training points $\bx^{\text{train}}_i\sim p_d$, sampled from the two-moons dataset, are represented in {\color{orange}orange}. The generated points correspond to $k$ approximates copies of the training set $\bx^{\text{gen}}_i = \bx^{\text{train}}_i + \mathcal{N}(0,10^{-4})\,,\, i=1,\ldots,k$ and $200-k$ independent samples from the data distribution $\bx^{\text{gen}}_i \sim p_{d}, i=k+1,\ldots,200$. The dark areas correspond to high-density values.}
    \label{fig:density_fit}
    \vspace{-10pt}
\end{figure*}
\looseness=-1

\xhdr{Step 4: Evaluate MoG density} A foundational concept used by FLD is to evaluate the perceptual negative log-likelihood of a held-out test set using an MoG density estimator.  To quantitatively evaluate the density obtained in Step 3, we evaluate the negative log-likelihood of $\gD_{\text{test}}$ under $p_{\hat \sigma}(\bx)$. As demonstrated in Figure~\ref{fig:density_fit}, in settings with $k > 0 $, the generated samples are too close to the training set, meaning that all test samples will have a high negative log-likelihood (as they are far from the center of Gaussians with low variances). Evaluation of the test set provides a succinct way of measuring the generalization performance of our generative model, which is a key aspect that is lost in metrics such as IS and FID. Our final FLD score is thus given by the following expression:
\begin{align}
    \text{FLD}(\gD_{\text{test}},\gD_{\text{gen}}) &:= - \tfrac{100}{d}\log p_{\hat \sigma}(\gD_{\text{test}} | \gD_{\text{gen}}) - C,
    \label{eqn:fld_overall}
\end{align}
where $d$ is the dimension of the feature space $\varphi$ and is equivalent to looking at the $d^{th}$ root of the likelihood, and $C$ is a dataset dependant constant.\footnote{We set $C$ for the FLD to be zero with an ideal generator (i.e. a generator that could sample images from the true data distribution). In practice we can estimate such a $C$ by splitting the train set in 2, one part to represent the perfect $ \gD_{\text{gen}}$ and the other to compute $\hat \sigma$. Note that $C$ does not change the relative score between models.} As a result of this adjustment by a constant, FLD is essentially estimating the forward Kullback-Leibler (KL) divergence (up to a constant factor) between the learned MoG distributions of the true data and the generated data. Higher FLD score values are indicative of problems in some of the three areas evaluated by FLD. Poor sample fidelity leads to Gaussian centers that are far from the test set and thus a higher NLL. Similarly, a failure to sufficiently cover the data manifold will lead to some test samples yielding very high NLL. Finally, overfitting to the training set will yield a MoG density estimator that overfits and yield a poor NLL value on the test set.

\xhdr{Computational Complexity of FLD}
To quantify the computational complexity of FLD we plot, in Figure \ref{fig:computational_complexity} in~\S\ref{app:comp_complexity}, the computation time of various metrics with the number of samples. As depicted, we observe a linear scaling in the number of train samples for FLD which is in line with the computational cost of popular metrics like FID. Finally, the cost of computing FLD---and other metrics---is dwarfed by the cost of generating samples and then mapping them to an appropriate feature space which is a one-time cost and can be done prior to any metric computation.


\cut{
\subsection{FLD Computation} 
\label{sec:fld_score}
Now that we've established our method of density estimation for high-dimensional settings, we detail how to use it to compute our FLD score. 

FLD is designed to provide a single score that takes into account \textbf{sample quality} (IS, precision, etc.), \textbf{sample diversity} (FID, recall) while also punishing generative models that \textbf{overfit}. To compute our FLD score we use $3$ sets of samples: the training set (20000 samples), the test set (10000 samples), and a set of generated samples (10000 samples). 

Unless indicated otherwise, for our experiments we used the number of samples given in parentheses. We experimented with larger amounts of samples but this had a minimal impact on score values. We then map all the samples to the chosen feature space (Inception v3 or CLIP).\footnote{FLD can theoretically be used in any domain with any method for mapping inputs to a meaningful feature space. An investigation of this is beyond the scope of this paper.} Once mapped, we normalize the features to have 0 mean and unit variance before computing our scores.

We start by taking the training set and splitting it randomly into two even subsets, $\gD_{\text{train}} =  \hat{\gD}_{\text{train}} \cup \gD_{\text{baseline}}$. The first half $ \hat{\gD}_{\text{train}}$ will be used to fit the MoG density estimator while the second half $\gD_{\text{baseline}}$ will act as our baseline. Assuming that the training set and testing set are drawn i.i.d. from $p_{\text{data}}$, the second subset $\gD_{\text{baseline}}$ serves as a reasonable proxy for samples generated by a perfect generative model. We then fit two MoG density estimators to $ \hat{\gD}_{\text{train}}$. The first, $\gN(\varphi(\gD_{\text{gen}}); \hat{\Sigma}_{\text{gen}})$, uses the generated samples as centers while the second, $\gN(\varphi(\gD_{\text{baseline}}); \hat{\Sigma}_{\text{baseline}})$ uses the second half of the train data as centers---i.e. $\varphi(\gD_{\text{baseline}})$. The training procedure for the MoG is outlined in Algorithm \ref{alg:FLD}. Finally, we evaluate the log-likelihood of $\gD_{\text{test}}$ under both MoGs which we denote as $nll_{\text{gen}} = \log \gN(\varphi(\gD_{\text{test}})|\varphi(\gD_{\text{gen}}); \hat{\Sigma}_{\text{gen}})$ and $nll_{\text{baseline}} = \log \gN(\varphi(\gD_{\text{test}})|\varphi(\gD_{\text{baseline}}); \hat{\Sigma}_{\text{baseline}})$ respectively. FLD is then defined as:
\begin{align}
    \text{FLD}(\gD_{\text{gen}}) &:=  \exp \left( 2 \frac{nll_{\text{baseline}} - nll_{\text{gen}}}{d} \right ) \times 100.
    \label{eqn:fld_overall}
\end{align}
where $d$ is the dimension of the feature space. This is equivalent to looking at the $d^{th}$ root of the likelihood ratio. A visual depiction of the process is provided in Fig.~\ref{score_computation_both}. Intuitively, the score can be considered as a grade with a value of 100 indicating a ``perfect" generative model that does as well as a set of samples drawn from the data distribution. Lower values are indicative of problems in some of the three areas evaluated by FLD. Poor sample quality will lead to Gaussian centers that are far from the test set and thus a lower likelihood. A failure to sufficiently cover the data manifold will lead to some test samples having very low likelihoods. Finally, overfitting to the training set will yield the MoG density estimator to overfit and yield a bad likelihood value on the test set.
}

\subsection{Detecting Memorization and Overfitting}
\label{sec:detecting_overfitting_and_memorizatio}
\vspace{-5pt}

\xhdr{Memorization} One of the key advantages of FLD over other metrics like FID is that it can be used to precisely characterize memorization at the sample level. 
Intuitively, memorization occurs when a generated sample $\bx_j^{\text{gen}}$ is an approximate copy of a training sample $\bx_{i}^{\text{train}}$. By Proposition~\ref{prop1}, such a phenomenon encourages the optimization of ~\eqref{eq:cross_val} to select a $\hat{\sigma}^2_j \ll 1$ to achieve a high training NLL. As a result, such samples will assign a disproportionately high likelihood to that $\bx_{i}^{\text{train}}$. To quantify this phenomenon, we compute the train likelihood assigned by each fitted Gaussian.


\begin{definition}
Let $\delta>0$. The sample $x^{\text{gen}}_j$ is said to be $\delta$-memorized if 
\begin{equation}
\gO_j := \max_i \gN(\varphi(x^{\text{train}}_i)| \varphi(x^{\text{gen}}_j); \hat{\sigma_j}^2I) >\delta \,.
\label{eq:O_j}
\end{equation}

\end{definition}
\looseness=-1
The quantity $\gO_j$ is appealing because it is efficient to compute, and the distribution of $\{\gO_j\}$ allows us to quantify what is a ``large'' value for $\delta$ and identify the generated samples that are the most-likely copies of the training set. In \S\ref{sec:diagnoising_overfitting}, we explore this method to assess sample novelty.


\xhdr{Overfitting} Intuitively, a generative model is overfitting when it is more likely to generate samples closer to the train set than to the (unseen) test set. Thus, overfitting for deep generative models can be precisely defined using the standard tools for likelihood estimation. 
\begin{definition}
Given samples $\gD_{\text{gen}} = \{ \bx^{\text{gen}} \}_{i=1}^n$ from a generative model $G$ trained on $\gD_{\text{train}}$ and unseen test samples $\gD_{\text{test}}$. We say that $G$ is overfitting if $\log p_{\hat \sigma}(\gD_{\text{test}} | \gD_{\text{gen}}) < \log p_{\hat \sigma}(\gD_{\text{train}} | \gD_{\text{gen}})$, i.e if:
\begin{equation}
\text{Generalization Gap FLD} := \text{FLD}(\gD_{\text{train}},\gD_{\text{gen}}) -\text{FLD}(\gD_{\text{test}},\gD_{\text{gen}}) < 0
\label{eq:O_j}
\end{equation}
\end{definition}
\looseness=-1
For FLD, this effect is particularly noticeable due to the MoG being fit to the training set. In fact, samples that are too close to the train set relative to the test set have two effects: they worsen the density estimation of the MoG (increasing $\text{FLD}(\gD_{\text{test}},\gD_{\text{gen}})$ and assign higher likelihood to the train set (lowering $\text{FLD}(\gD_{\text{train}},\gD_{\text{gen}})$). In section~\ref{sec:diagnoising_overfitting} and Tab. \ref{tab:performance-metrics} experiments we empirically validate the overfitting behavior of popular generative models using our above definition and also visualize samples that are most overfit.

\xhdr{Evaluating individual sample fidelity} While FLD focuses on estimating the density learned by the generative model, it is also possible to estimate the density of the data and use that to evaluate the likelihood of the generated samples\footnote{This is somewhat analoguous to the difference between Recall and Precision \citep{sajjadi2018assessing, kynkaanniemi2019improved} where FLD can be seen as a smoother version of these metrics.}. In particular, instead of centering the Gaussians at the generated samples, we can instead \textbf{center them at the test set}. Then, after fitting this MoG to the train set, we compute the likelihood it assigns to generated samples and use that as a measure of sample quality. More formally:
\begin{equation}
    \gQ_j := \gN(\varphi(\bx^{\text{gen}}_j)|\varphi(\gD_{\text{test}}); \Sigma).
    \label{eq:Q_j}
\end{equation}
As is the case for $\gO_j$, $\gQ_j$ is easy to compute once the Mog is fit and can be used to rank and potentially filter out poor fidelity samples.

\section{Experiments}
\label{sec:experiments}
\vspace{-5pt}

\looseness=-1
We investigate the application of FLD on generative models that span a broad category of model families, including popular GAN and diffusion models. For datasets, we evaluate a variety of popular natural image benchmarks in CIFAR10 \citep{krizhevsky2014cifar}, FFHQ \citep{karras2019style} and ImageNet \citep{deng2009imagenet} . Through our experiments, we seek to validate the correlation between FLD and sample fidelity, diversity, and novelty. 

\cite{stein2023exposing} find that the DINOv2 feature space allows for a more comprehensive evaluation of generative models (relative to Inception-V3) and correlates better with human judgement. As such, for our experiments, unless indicated otherwise, we map samples to the DINOv2 feature space \citep{oquab2023dinov2}. We do so even for other metrics (e.g. FID, Precision, Recall, etc.) which have typically used other feature spaces (comparisons with vanilla FID are provided in Appendix~\S\ref{app:inception_results}).

\begin{figure*}[t]
\centerline{\includegraphics[width=\textwidth]{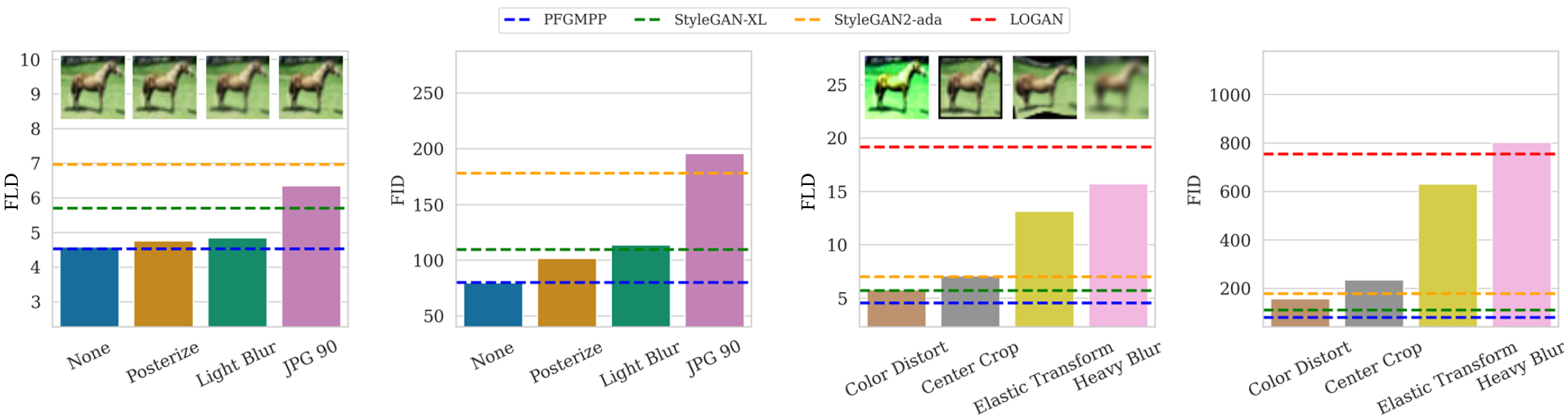}}
\caption{Starting from a set of SOTA samples produced by PFGM++, we replace each sample with a transformed copy. \textbf{Left:} Effect of nearly imperceptible transformations on FLD and FID (with corresponding values for various models as reference). \textbf{Right:} Effect of large transformations on FLD and FID.}
\label{fig:fld_and_fidelity_plot}
\vspace{-10pt}
\end{figure*}

\subsection{Trichotomic evaluation of FLD}
We now experimentally validate FLD's ability to evaluate the samples of generative models along the three axes of fidelity, diversity and novelty.
\subsubsection{Sample Fidelity}
\vspace{-5pt}
\looseness=-1
We evaluate the effect of 2 types of transformations on FLD and plot the results in Fig.~\ref{fig:fld_and_fidelity_plot}.  The first consists of minor, almost imperceptible transformations (very minor gaussian blur, posterizing and converting to a high quality JPG) that are problematic for FID. As described in \citep{parmar2022aliased} small changes to images such as compression or the wrong form of anti-aliasing increase FID substantially despite yielding essentially indistinguishable samples. These transformations affect FLD but \emph{noticeably less than FID}. This phenomenon occurs even when we use DINOv2 for both FLD and FID (though the effect is more drastic using the original Inception-V3 feature space, see Appendix~\S\ref{app:inception_results}).

Specifically, when all the samples are transformed, FID rates the imperceptibly transformed samples PFGM++ samples as worse than those produced by StyleGAN-XL (or even worse than StyleGAN2-ada). On the other hand, while the imperceptible transforms yield slightly worse FLD values (in part due to the feature space being sensitive to them), the FLD values are barely changed for the "Posterize" and "Light Blur" transforms and only somewhat worse for "JPG 90". The second type of transformations are larger transformations that affect the structure of the image (e.g. cropping, rotation, etc.) and have a significant negative impact on both FLD and FID. 


\subsubsection{Sample Diversity}
\begin{figure*}[t]
\centerline{\includegraphics[width=\textwidth]{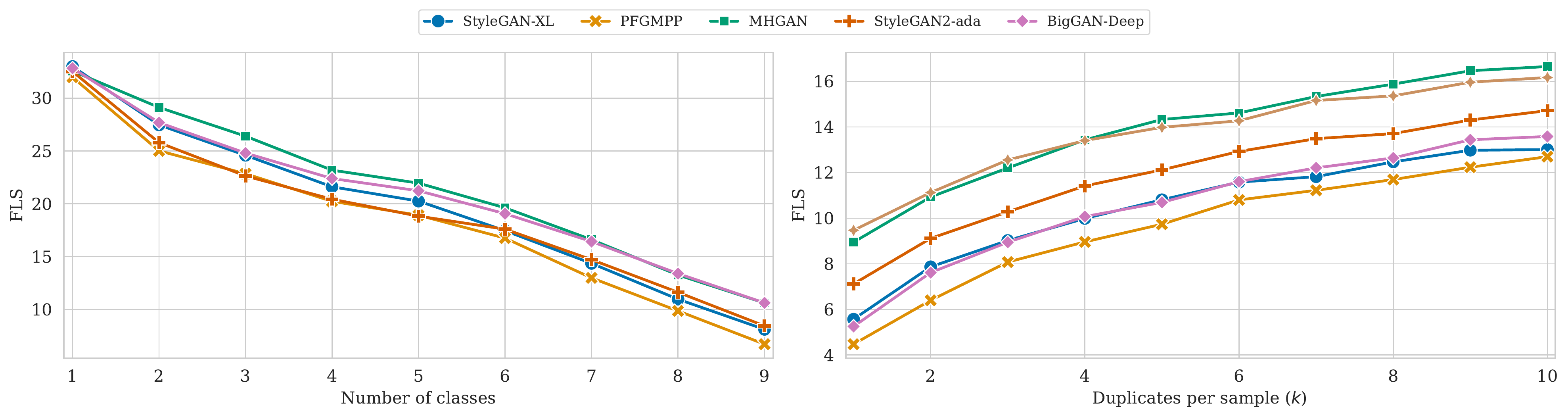}}
\caption{\textbf{Left:} FLD for various models as we increase the number of classes included in the generated samples. \textbf{Right:} FLD as the set of generated samples includes increasing amounts of copies of itself.}
\label{fig:fld_and_diversity_plot}
\vspace{-10pt}
\end{figure*}
\vspace{-5pt}
\looseness=-1
We consider two experiments on CIFAR10 to evaluate the effect of mode coverage and diversity on FLD. For the first, we vary the number of classes included in the set of generated samples for various conditional generative models. For the second, instead of the original $n$ generated samples, we look at a set comprised of $\frac{n}{k}$ samples from the original set combined with $(k-1)$ approximate copies of each of those samples (i.e. for a total of $n$ samples). In both cases, we find a strong and consistent relationship (across all models) indicating that both sample diversity and mode coverage are important to get good FLD values and report our findings in Fig. \ref{fig:fld_and_diversity_plot}.


\subsubsection{Sample Novelty}

\vspace{-5pt}
\label{sec:diagnoising_overfitting}
\looseness=-1

\begin{figure*}[t]
\centerline{\includegraphics[width=\textwidth]{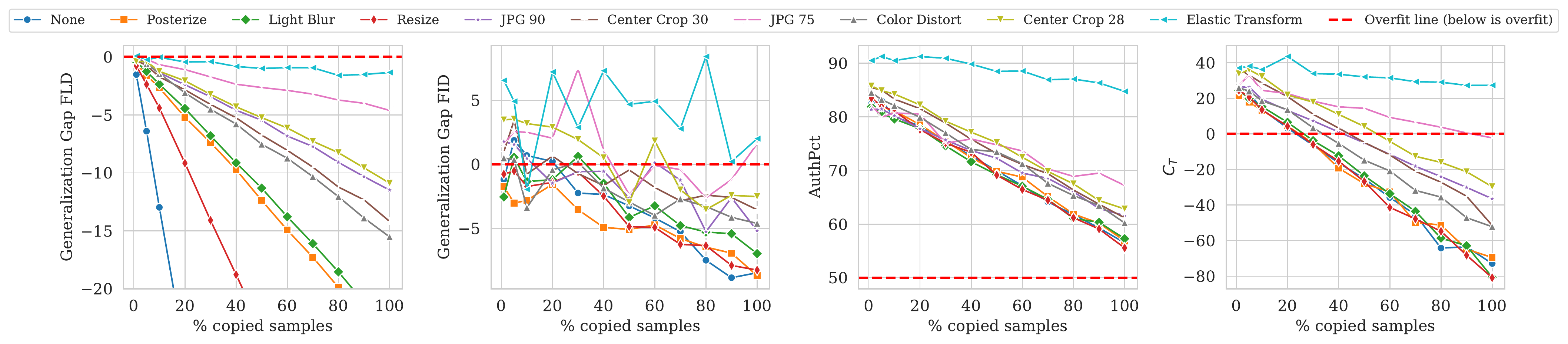}}
\caption{\small Comparison of various overfitting metrics when $\gD_{\text{gen}}$ consists of a combination of transformed generated samples and transformed copies of the train set.}
\label{fig:transformed_cifar}
\vspace{-10pt}
\end{figure*}

We now study the effect of data copying on CIFAR10 on evaluation metrics for generative models. As overfitting can be more subtle than direct copying of the training set, we also consider transformed copies of the train set. For each transform in Fig. \ref{fig:transformed_cifar}, we start with PFGM++ samples that have had the transform applied and gradually replace them with transformed copies of the training set. As such, sample fidelity/diversity in this "pseudo-generated" set remains roughly constant while overfitting increases.

We then evaluate this set of samples using a variety of metrics designed to detect overfitting. For FLD, we look at the generalization gap 
$\text{FLD}(\gD_{\text{train}},\gD_{\text{gen}}) - \text{FLD}(\gD_{\text{test}},\gD_{\text{gen}})$). For FID, we consider the difference $\text{FID}_{\text{train}} - \text{FID}_{\text{test}}$ (where $\text{FID}_{\text{test}}$ is FID using the test set). We use the same amount of samples (10k) for both as FID is biased. $C_T$ \citep{meehan2020non} is the result of a Mann-Whitney test on the distribution of distances between generated and train samples compared to the distribution of distances between train samples and test
samples (negative implies overfit, positive implies underfit). The AuthPct $\in [0, 100]$ is derived from
authenticity described in \citep{alaa2022faithful} and is simply the percentage of generated samples deemed
authentic by their metric.

We find that FLD is the only metric consistently capable of detecting overfitting for all transforms (though the effect is less pronounced for large transforms). The generalization gap of FID oscillates though generally trends in the right direction. However, the magnitude of the gap is small (considering that the FID values using DINOv2 are considerably larger, e.g. the difference between StyleGAN2-ADA and StyleGAN-XL is >60). AuthPct trends in the right direction but seems to only be able to detect a subset of the memorized samples (as it detects none of the sample sets as overfit). Finally, $C_T$ is capable of detecting overfitting for all but the largest transforms (though it does require a significant proportion of copied samples).



\subsubsection{Interaction effects}
\begin{figure*}[t]
\centerline{\includegraphics[width=\textwidth]{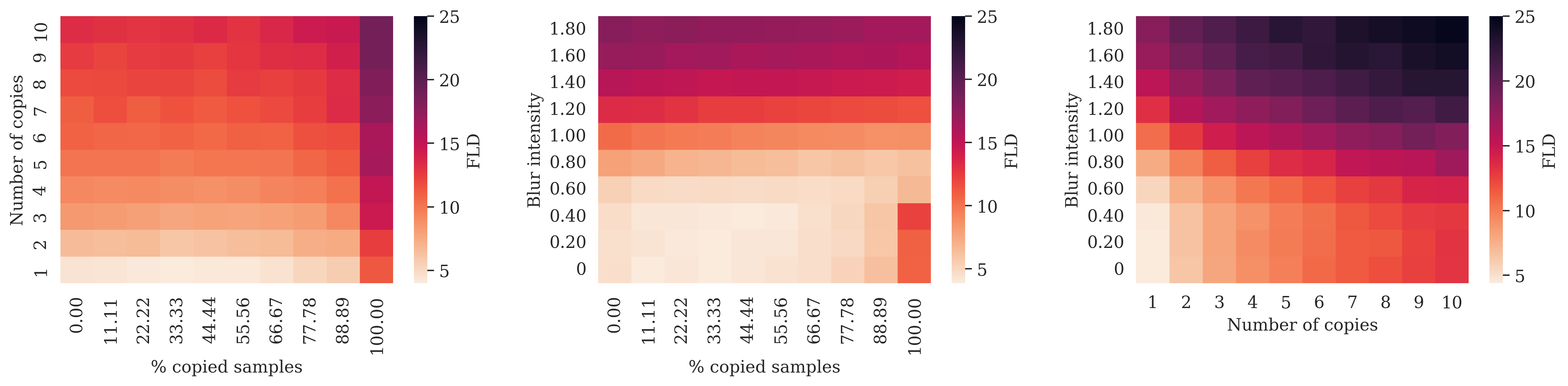}}
\caption{\small Heatmap of FLD values for all pairs of the three axes of generalization.}
\label{fig:heatmap}
\vspace{-10pt}
\end{figure*}
We now examine the relative effect of these three axes of evaluation on FLD. For fidelity, we use an increasing blur intensity. For diversity, we take a subset of the generated samples and replace the rest of the sample with copies of this subset. For novelty, we replace generated samples with copies of the training set. Then, in Fig. ~\ref{fig:heatmap}, starting from high quality generated samples, we vary each combination of the 2 axes and observe the effect on FLD through a heatmap.

In summary, Fig. ~\ref{fig:heatmap} indicates that for FLD, fidelity matters more than diversity which matters more than novelty (with the notable exception of all samples being copied resulting in very high FLD). We argue this ordering is very much aligned with the potential usefulness of a generative model. If samples have poor fidelity, then regardless of their diversity and novelty, they will not be useful. With poor diversity but good fidelity and novelty, removing duplicates yields a useful generative model. Finally, if the model generates copies of the training set, it can still be useful \textbf{as long as it generates novel samples as well}.

\subsection{Comparison of Evaluation of State-of-the-Art Models}

\looseness=-1
Using samples provided by \citep{stein2023exposing}, we perform a large-scale evaluation of various generative models in Tab. \ref{tab:performance-metrics} using different metrics on CIFAR10, FFHQ and ImageNet. We find that FLD yields a similar model ranking as FID with some exceptions (for example MHGAN vs BigGAN-Deep on CIFAR10). In addition, we observe  that modern generative models are overfitting in a benign way on CIFAR, i.e., there is a noticeable gap between train and test FLD even though the test FLD is reasonably low (indicating good generalization). 

\begin{table}[h]
    \centering
    \small
    \setlength{\tabcolsep}{4pt}
\begin{tabular}{@{}c c c c c c c c c @{}}
Dataset & Model & FLD $\downarrow$ & FID $\downarrow$ & Gen Gap FLD $\uparrow$ & $C_T$ $\uparrow$ & AuthPct $\uparrow$ & Prec. $\uparrow$ & Recall $\uparrow$ \\
\toprule
\multirow{8}{*}{\rotatebox[origin=c]{90}{CIFAR10}} 
 & ACGAN-Mod & 24.22 & 1143.07 & 0.10 & 26.11 & 72.09 & 0.76 & 0.00 \\
 & LOGAN & 18.94 & 753.34 & 0.18 & 55.66 & 84.10 & 0.63 & 0.13 \\
 & BigGAN-Deep & 9.28 & 203.90 & -0.05 & 55.70 & 88.10 & 0.50 & 0.26 \\
 & MHGAN & 8.84 & 231.38 & -0.01 & 47.87 & 86.69 & 0.55 & 0.26 \\
 & StyleGAN2-ada & 6.86 & 178.64 & -0.09 & 45.31 & 86.40 & 0.54 & 0.33 \\
 & iDDPM-DDIM & 5.63 & 128.57 & -0.33 & 39.65 & 84.60 & 0.57 & 0.55 \\
 & StyleGAN-XL & 5.58 & 109.42 & -0.23 & 36.79 & 85.29 & 0.57 & 0.13 \\
 & PFGMPP & 4.58 & 80.47 & -0.35 & 32.79 & 83.54 & 0.60 & 0.62 \\
\midrule
\multirow{9}{*}{\rotatebox[origin=c]{90}{FFHQ256}} 
 & Efficient-VDVAE & 9.40 & 465.34 & -0.10 & 32.22 & 86.48 & 0.66 & 0.07 \\
 & Projected-GAN & 8.61 & 339.72 & 0.05 & 36.77 & 93.46 & 0.40 & 0.11 \\
 & StyleGAN2-ADA & 7.24 & 296.93 & -0.08 & 27.47 & 90.95 & 0.49 & 0.06 \\
 & Unleashing & 7.23 & 287.38 & -0.06 & 38.98 & 90.01 & 0.57 & 0.19 \\
 & InsGen & 6.20 & 249.91 & -0.08 & 26.06 & 89.74 & 0.52 & 0.12 \\
 & StyleGAN-XL & 5.94 & 155.88 & -0.04 & 39.41 & 88.17 & 0.56 & 0.30 \\
 & StyleSwin & 5.77 & 200.80 & -0.29 & 32.19 & 87.68 & 0.60 & 0.20 \\
 & StyleNAT & 4.69 & 156.38 & -0.12 & 29.38 & 86.35 & 0.62 & 0.30 \\
 & LDM & 4.63 & 162.45 & -0.23 & 28.28 & 84.84 & 0.66 & 0.34 \\
\midrule
\multirow{6}{*}{\rotatebox[origin=c]{90}{ImageNet256}} 
 & RQ-Transformer & 11.55 & 212.99 & -0.53 & 125.48 & 86.10 & 0.39 & 0.55 \\
 & StyleGAN-XL & 8.46 & 150.27 & -0.40 & 98.69 & 84.10 & 0.43 & 0.26 \\
 & GigaGAN & 8.34 & 156.40 & -0.42 & 98.78 & 82.48 & 0.47 & 0.32 \\
 & Mask-GIT & 6.74 & 144.23 & -0.63 & 78.97 & 80.02 & 0.48 & 0.44 \\
 & LDM & 3.41 & 82.42 & -0.74 & 33.63 & 69.23 & 0.63 & 0.45 \\
 & DiT-XL-2 & 1.98 & 62.42 & -0.99 & 22.57 & 65.79 & 0.69 & 0.55 \\
\midrule
\end{tabular}

\caption{Summary of performance metrics for various generative models.}
\label{tab:performance-metrics}
\vspace{-5pt}
\end{table}

\subsection{Applications of FLD}

\xhdr{Identifying memorized samples} As discussed in \ref{sec:detecting_overfitting_and_memorizatio}, we sort the samples generated by various models trained on CIFAR10 according to their respective $\gO_j$ and plot samples for different percentiles. As demonstrated in Fig. \ref{fig:overfit_samples_cifar10}, all models produce a non-negligible amount of very near copies (especially a specific car of which there are multiple copies in the train set). For PFGM++ and StyleGAN-XL near copies exist even at the 1st percentile of $\gO_j$ (roughly in line with the findings of \citep{carlini2023extracting, stein2023exposing}). 

\begin{figure*}[t]
\centerline{\includegraphics[width=0.99\textwidth]{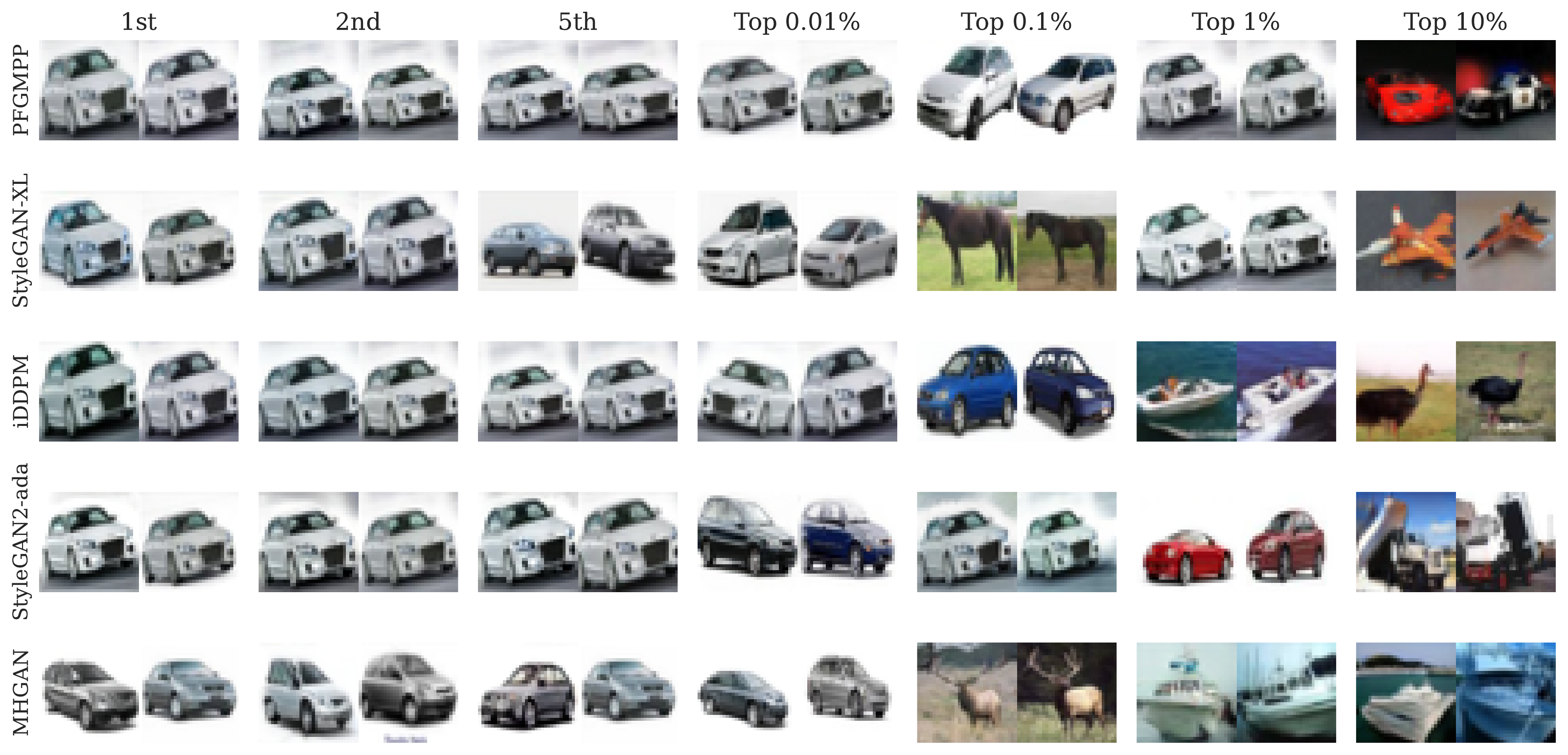}}
\caption{\small Ranked generated samples for different models from CIFAR10 according to $\gO_j$. The left sample is generated, the right one is the nearest sample in the train set (using distances in DINOv2 feature space).}
\label{fig:overfit_samples_cifar10}
\vspace{-10pt}
\end{figure*}

\xhdr{Evaluating individual sample fidelity} We repeat the process looking instead at the $\gQ_j$ of the models. From Fig. \ref{fig:fidelity_samples_cifar10}, the progression in fidelity of generative models is quite striking with older models such as LOGAN and ACGAN-Mod producing implausible images for all but their best samples. For recent, SOTA generative models, the top half of samples in terms of $\gQ_j$ are generally of high quality. However, the bottom half has many examples of poor quality samples that would easily be identified by most humans as being fake. Consequently, even if the ranking is not perfect, filtering generated samples using $\gQ_j$ could potentially be beneficial for downstream applications.

\begin{figure*}[t]
\centerline{\includegraphics[width=0.9\textwidth]{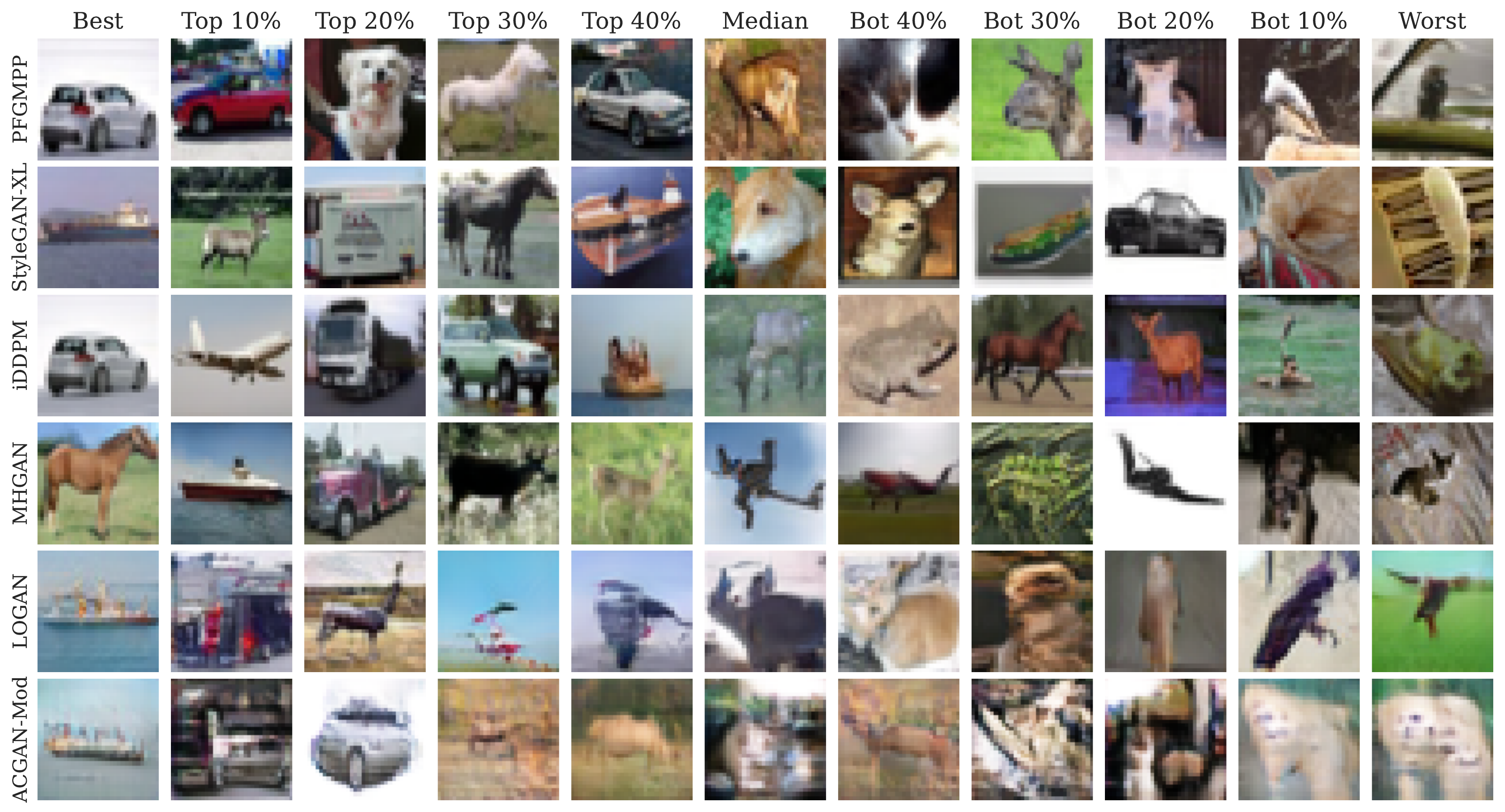}}
\caption{\small Ranked generated samples for different models from CIFAR10 according to $\gQ_j$.}
\label{fig:fidelity_samples_cifar10}
\vspace{-10pt}
\end{figure*}

\cut{
\begin{figure*}[h!]
\includegraphics[width=0.24\linewidth]{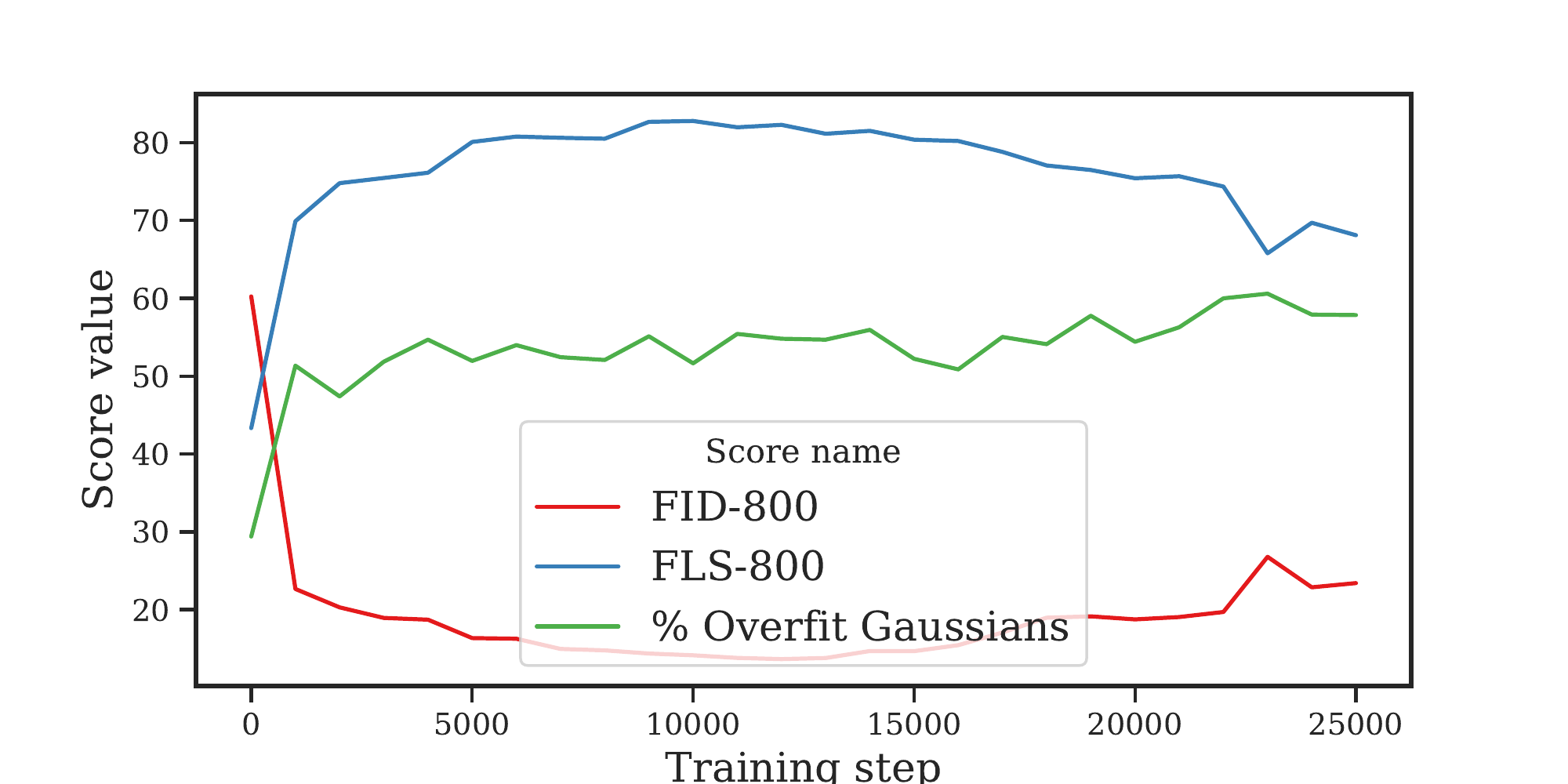}
\includegraphics[width=0.24\linewidth]{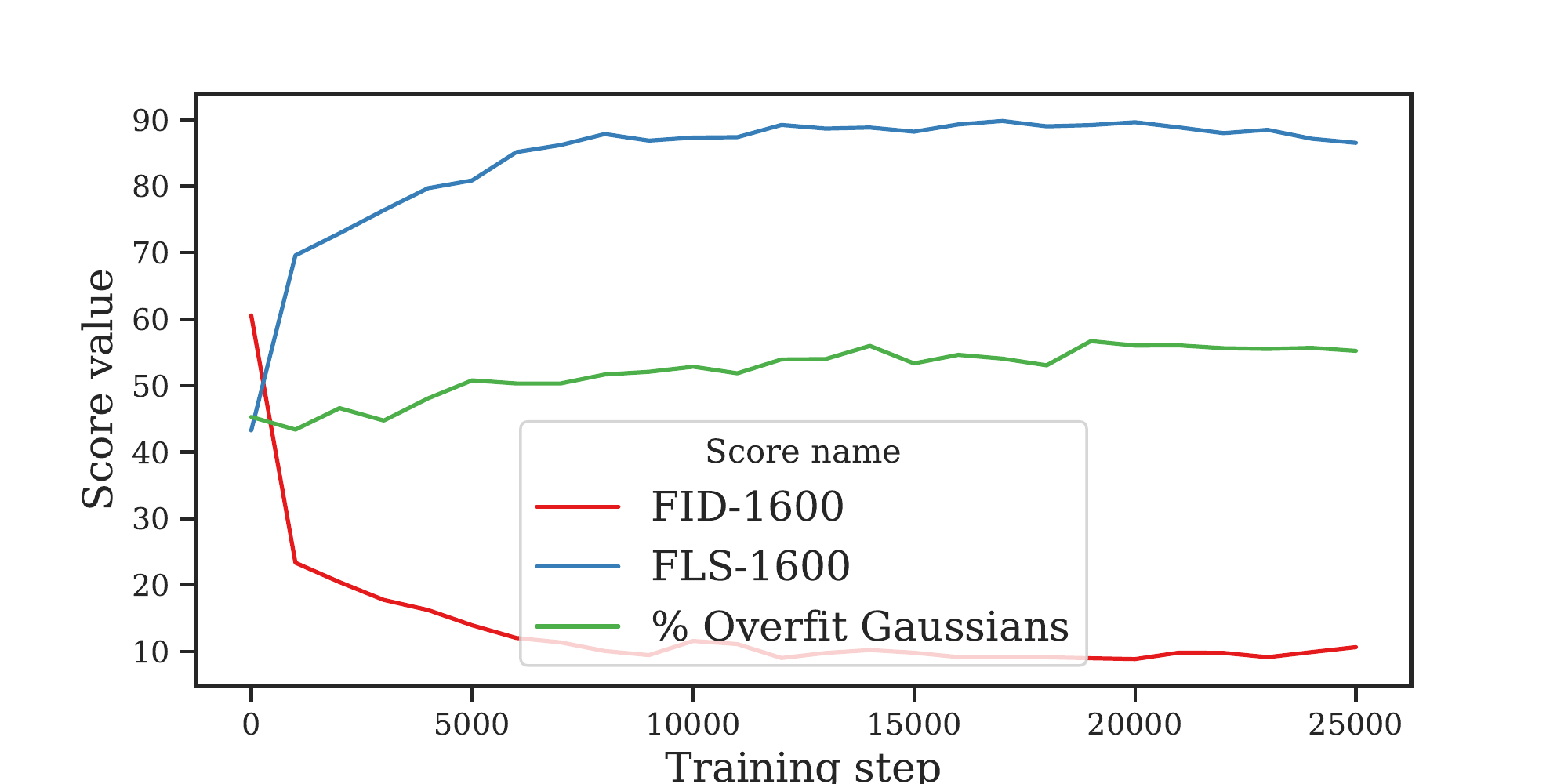}
\includegraphics[width=0.24\linewidth]{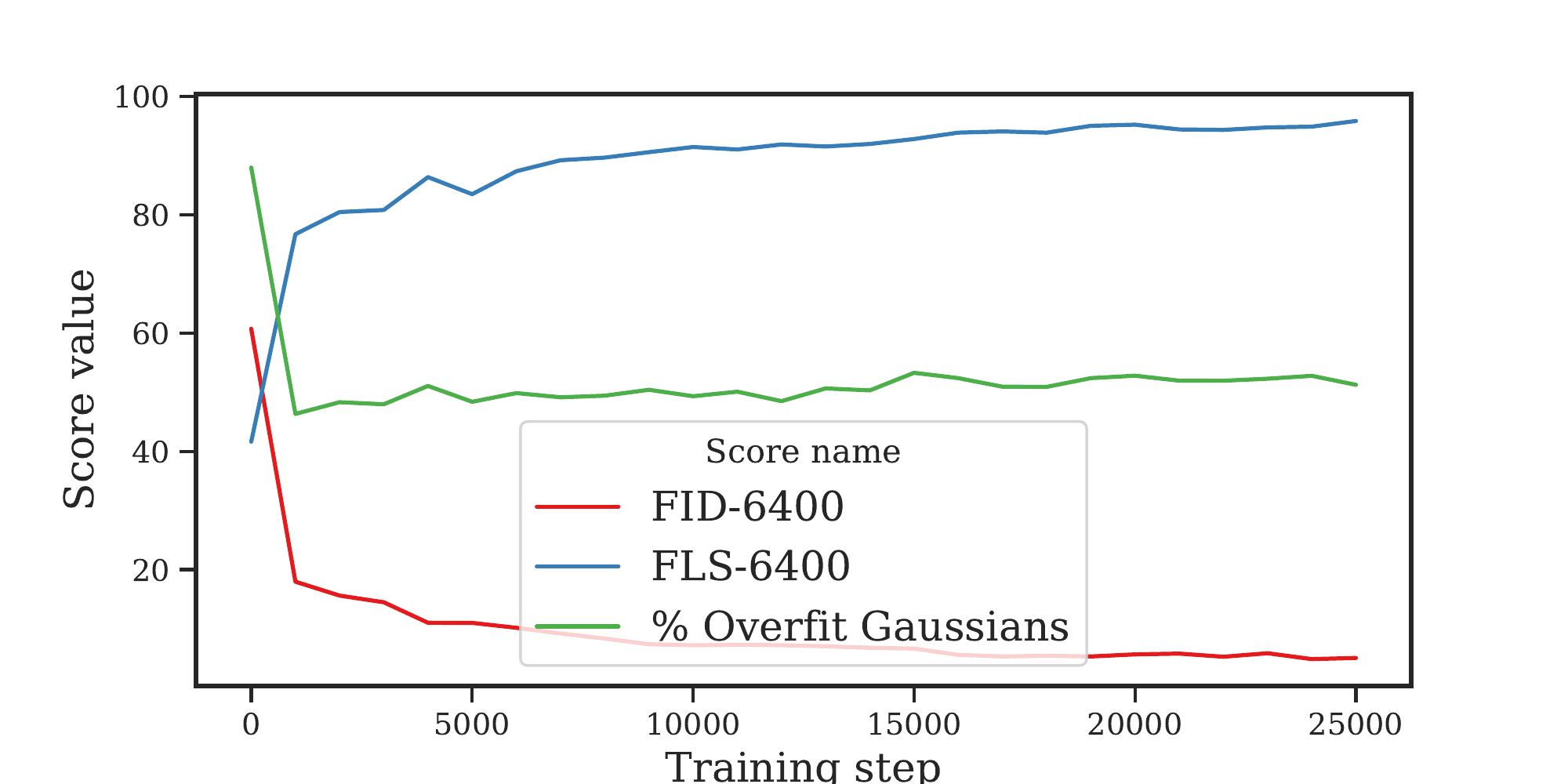}
\includegraphics[width=0.24\linewidth]{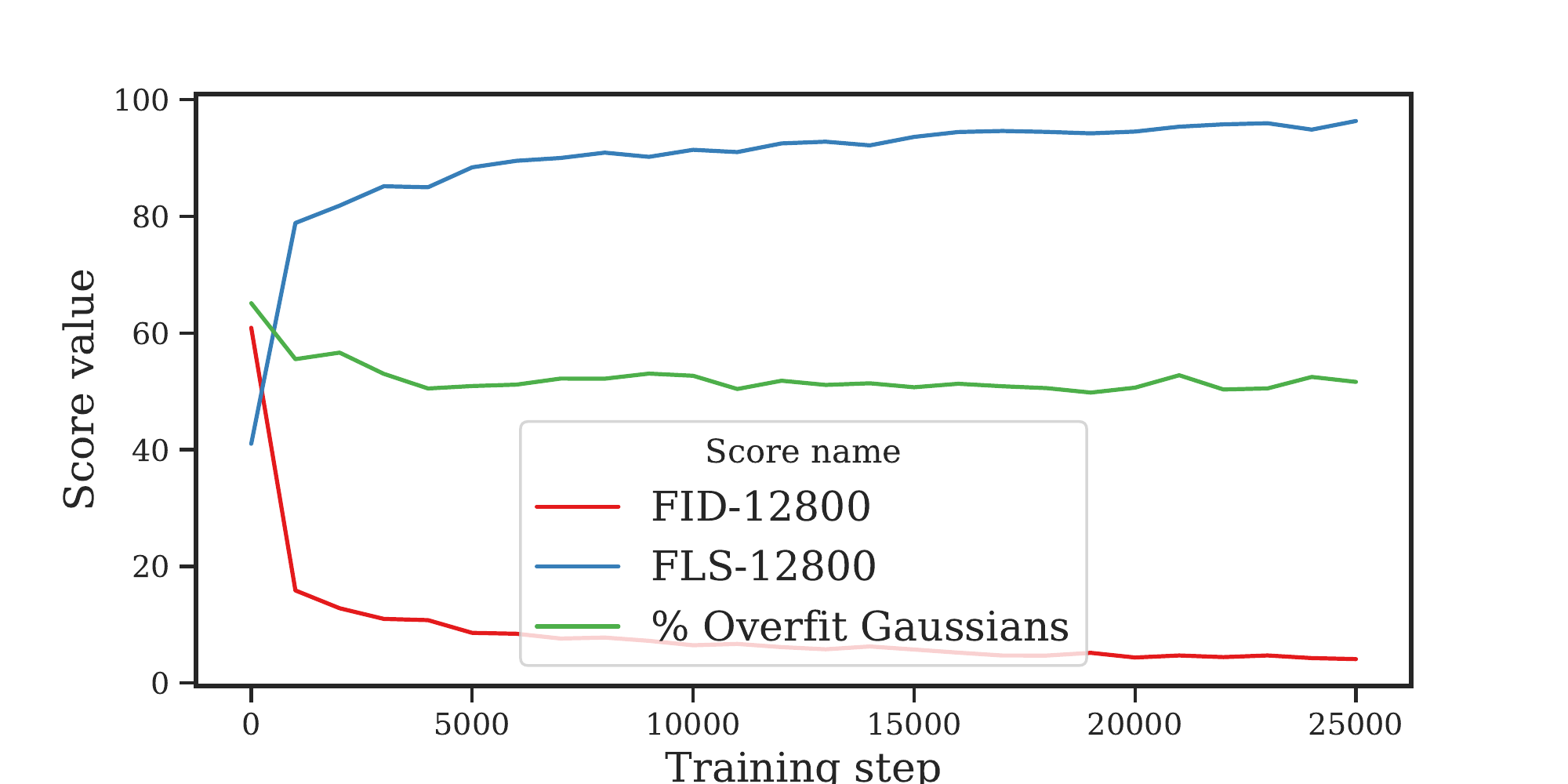}
\vspace{-10pt}
\caption{StyleGAN 2 evaluated using FLD, FID, and $\%$ Overfit Gaussians versus training dataset size over the course of training. From left to right we use batch sizes of $800, 1600, 6400, 12800$ respectively.}
\label{fig:stylegan2_training_dataset_size}

\end{figure*}
}

\cut{
\subsection{Computation Time}
\begin{figure*}[h!]
\begin{center}
\centerline{\includegraphics[width=\textwidth]{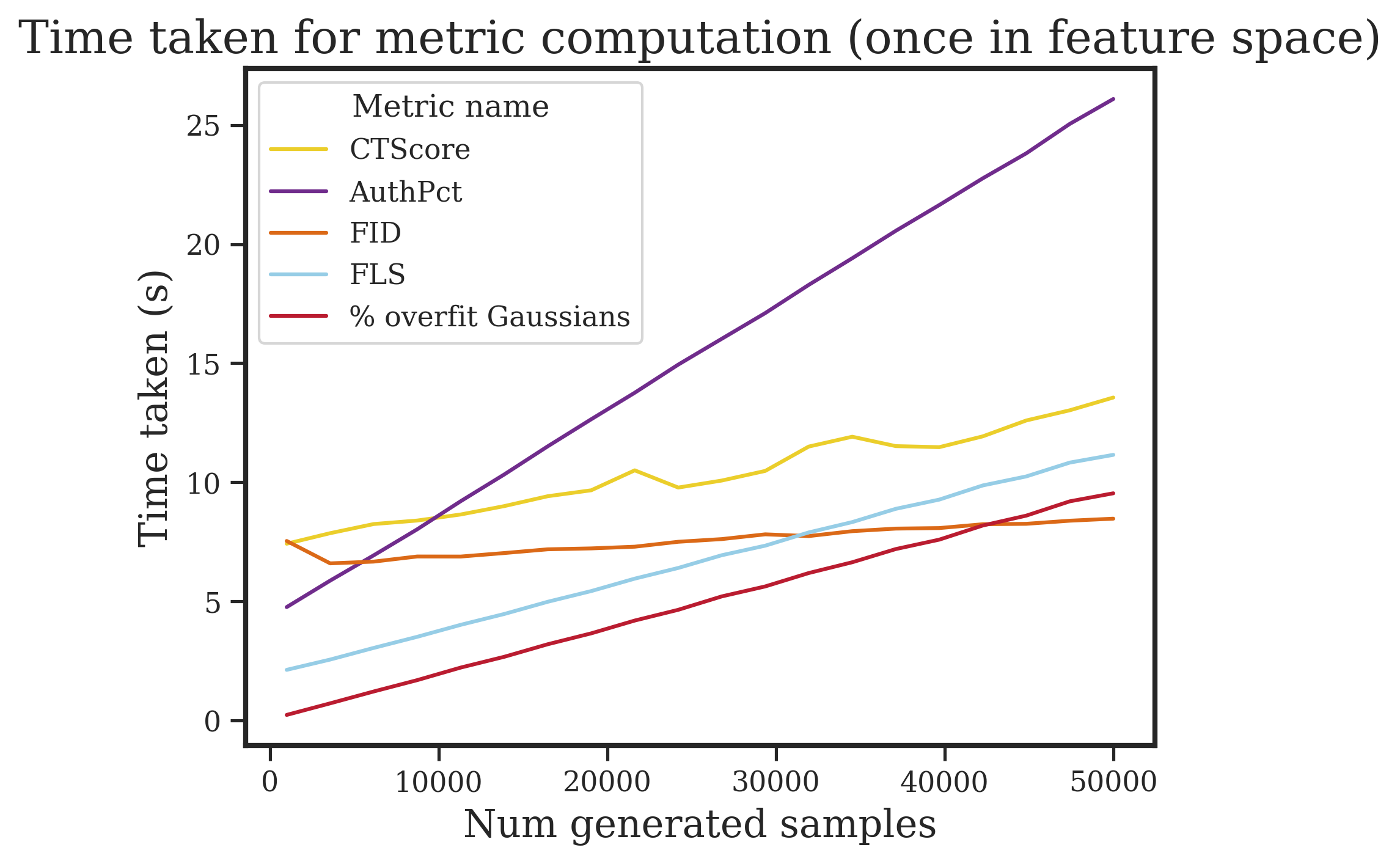}}
\vspace{-10pt}
\caption{Computation times for various metrics as a function of number of generated samples using 1 x RTX8000. For each metric, we use the original implementation except AuthPct where we made some simple improvements as it was taking a long time to compute in our experiments.}
\label{fig:time_taken}
\end{center}
\end{figure*}
To demonstrate the ease of use of our score, we evaluate its computation time and compare it to the other baselines we mention in the paper. while it scales linearly with the number of samples, we still find it compares favorably to the other metrics. Regardless, metric computation time is dwarfed in each case by the time required to generate the samples for evaluation and the time to map them to an appropriate feature space (a step used by each metric). For example, sample generation time can be over 3 orders of magnitude higher for some diffusion models.
}

\cut{
\xhdr{Detecting overfit samples}
Using our FLD score, it is possible to qualitatively inspect suspected overfitting samples in the following manner: since each Gaussian in the MoG used in FLD is initialized using the generated samples, it is possible to compute the likelihoods of train and tests under each Gaussian independently---i.e., we treat one Gaussian in the MoG as the generative model and use that to compute the train and test likelihood. By comparing the discrepancy between training and test likelihoods under each Gaussian in the generated samples and then ranking them, we achieve an ordering over the generated samples that have the highest to lowest contribution to overfitting. This notion of overfitting samples is more nuanced than a simple nearest neighbor test, capturing samples that are too close to multiple training images rather than just one (as depicted in \ref{fig:overfit_samples_using_FLD}). It is likely that an overfit sample will resemble a bunch of different training samples, borrowing a bit from each instead of just being close to one of them.


This method of detecting overfit samples provides a more nuanced analysis than typical nearest-neighbor comparisons (even if they are done in a suitable feature space). Even if there might be samples that are even closer to their training set equivalent, our method looks at the ensemble of samples. It is quite likely that an overfit sample will resemble a bunch of different training samples, borrowing a bit from each instead of just being close to one of them.
}



\section{Conclusion}
\vspace{-5pt}
\looseness=-1
We introduce FLD, a new holistic evaluation metric for deep generative models. FLD is easy to compute, broadly applicable to all generative models, and evaluates generation quality, diversity, and generalization. Moreover, we show that, unlike previous approaches, FLD provides more explainable insights into the overfitting and memorization behavior of trained generative models. We empirically demonstrate both on synthetic and real-world datasets that FLD can diagnose important failure modes such as memorization/overfitting, informing practitioners on the potential limitations of generative models that generate photo-realistic images. While we focused on the domain of natural images, a fertile direction for future work is to extend FLD to other data modalities such as text, audio, or time series and also evaluate conditional generative models.  



\section*{Acknowledgements}
The authors acknowledge the material support of NVIDIA in the form of computational resources. AJB was generously supported by an IVADO Ph.D. fellowship. Finally, the authors would like to thank Quentin Bertrand, David Dobre, and Alexia Jolicoeur-Martineau for helpful feedback on drafts of this work.

\clearpage
\bibliography{bibliography}
\bibliographystyle{abbrvnat}

\newpage
\appendix
\onecolumn

\section{Additional Results}
\subsection{Number of samples}
From Fig. \ref{fig:test_set_size}, FLD appears to be reliable even when using less reference set samples (similarly to KID \cite{binkowski2018demystifying}). This is of particular importance for applications in low data regimes or even conditional generation (e.g. the test set of CIFAR10 only contains 1k images for each class). On the other hand, FID highly recommends a minimum of 10k \cite{heusel2017gans}. Generally however, the whole training set is used as the value is still biased at 10k and a noticeably lower score can be obtained by using more samples.

As for generated samples, for FID, most papers use 50k as it gives a lower FID than with 10k. We find that using 10k samples for FLD is sufficient for robust evaluation. This is benefit is particularly important given the computational complexity of sampling from modern diffusion models (for example, generating 50k samples often takes hours).
\begin{figure*}[h!]
\begin{center}
\centerline{\includegraphics[width=\textwidth]{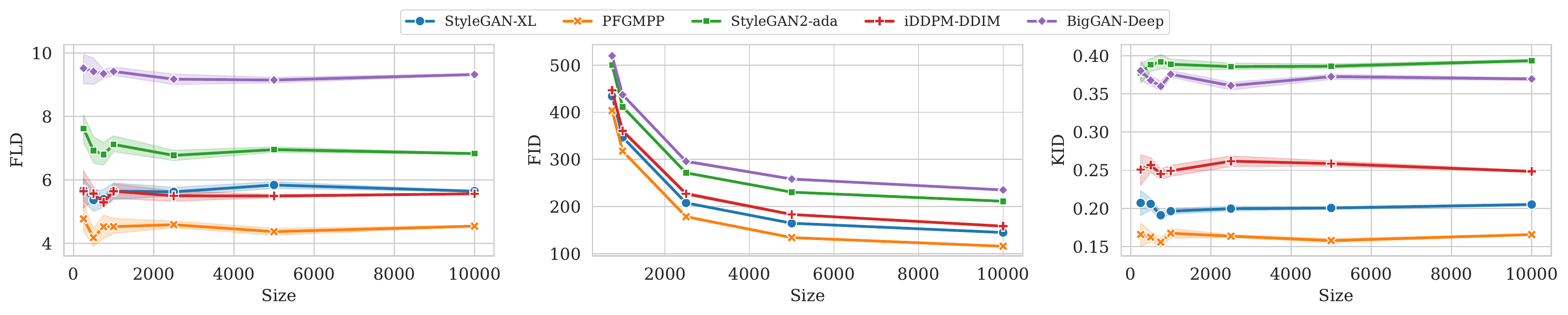}}
\caption{FLD, FID and KID for various sizes of the test set (here, for FID/KID, we use the test set as reference set). Both FLD and KID appear to be relatively consistent, even for smaller test set sizes, while FID is clearly biased.}
\label{fig:test_set_size}
\end{center}
\end{figure*}

\subsection{Computational Complexity of FLD}
\label{app:comp_complexity}
We include the plot of computation time of our score relative to other metrics and find that while it scales linearly with the number of train samples at a faster rate than many metrics, it is still roughly within the same order of magnitude to other metrics and insignificant compared to sample generation time.
\begin{figure}[ht!]
    \centering
    \includegraphics[width=0.4\linewidth]{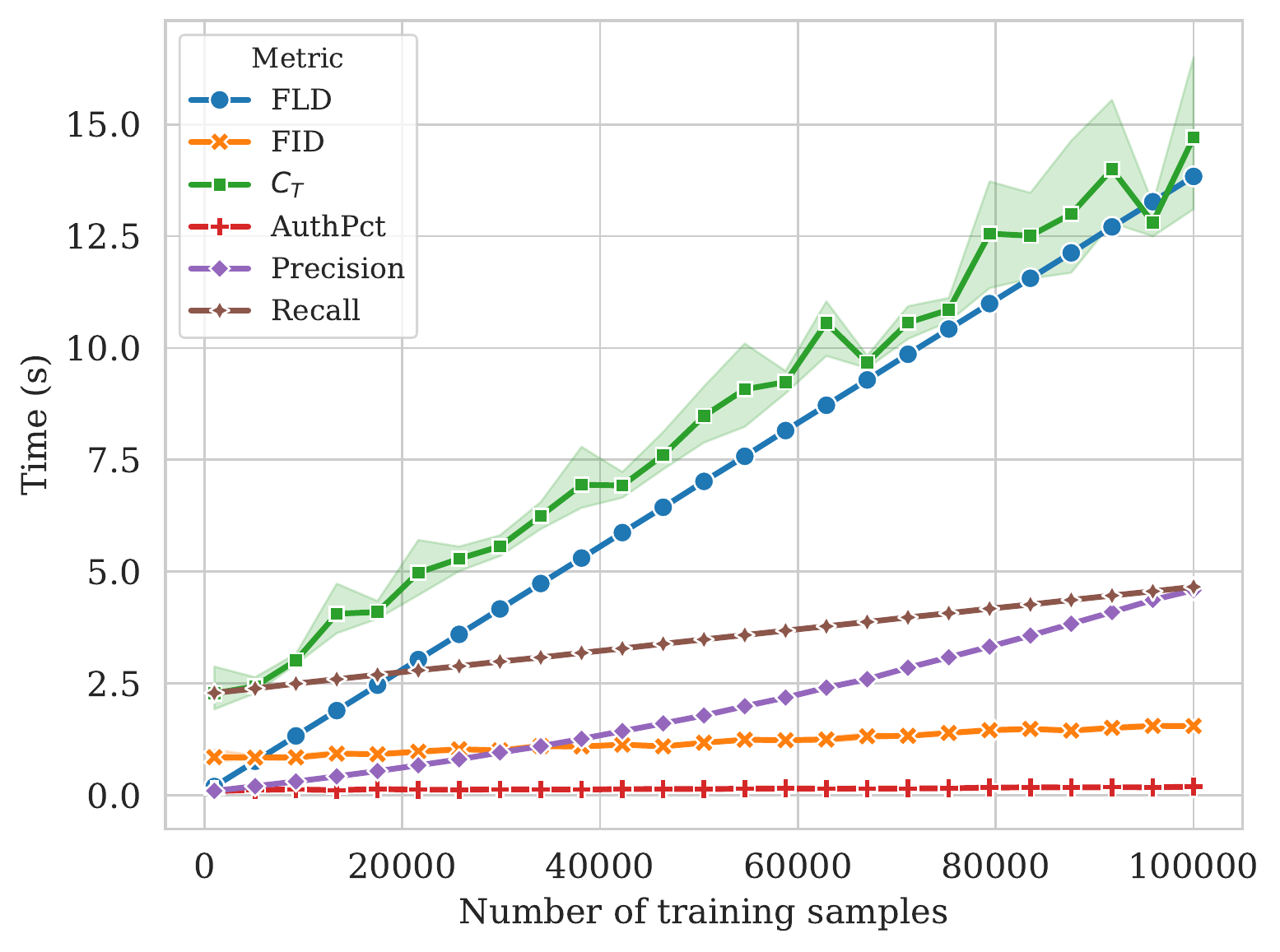}
    \caption{Time taken (on 1xRTX8000) for different metrics as we vary the number of train samples. We solely include the time to compute the metric value once the samples have been generated and mapped to the appropriate feature space (as these two steps are necessary for each metric and are significantly more computationally expensive than the metric computation).}
    \label{fig:computational_complexity}
\end{figure}

\subsection{Synthetic Experiments}
We include a synthetic 2D experiment analyzing how FLD/FID varies as the generated samples go from underfit to overfit. Specifically, we generate $3000$ points from the Two Moons dataset using scikit-learn ~\citep{pedregosa2011scikit} with a noise value of $0.1$. The first $2000$ points are used as the training set and the last $1000$ as test set. We fit a KDE to the train set using bandwidth values varying from $10^{-4}$ to $10$ and sample $1000$ points as generated samples before computing our score. As expected, FLD decreases initially as the samples begin to match the distribution. However, unlike Test FID, as the samples begin getting too close to the train samples, FLD increases, punishing the model for overfitting.

\begin{figure*}[h!]
\begin{center}
\centerline{\includegraphics[width=0.9\textwidth]{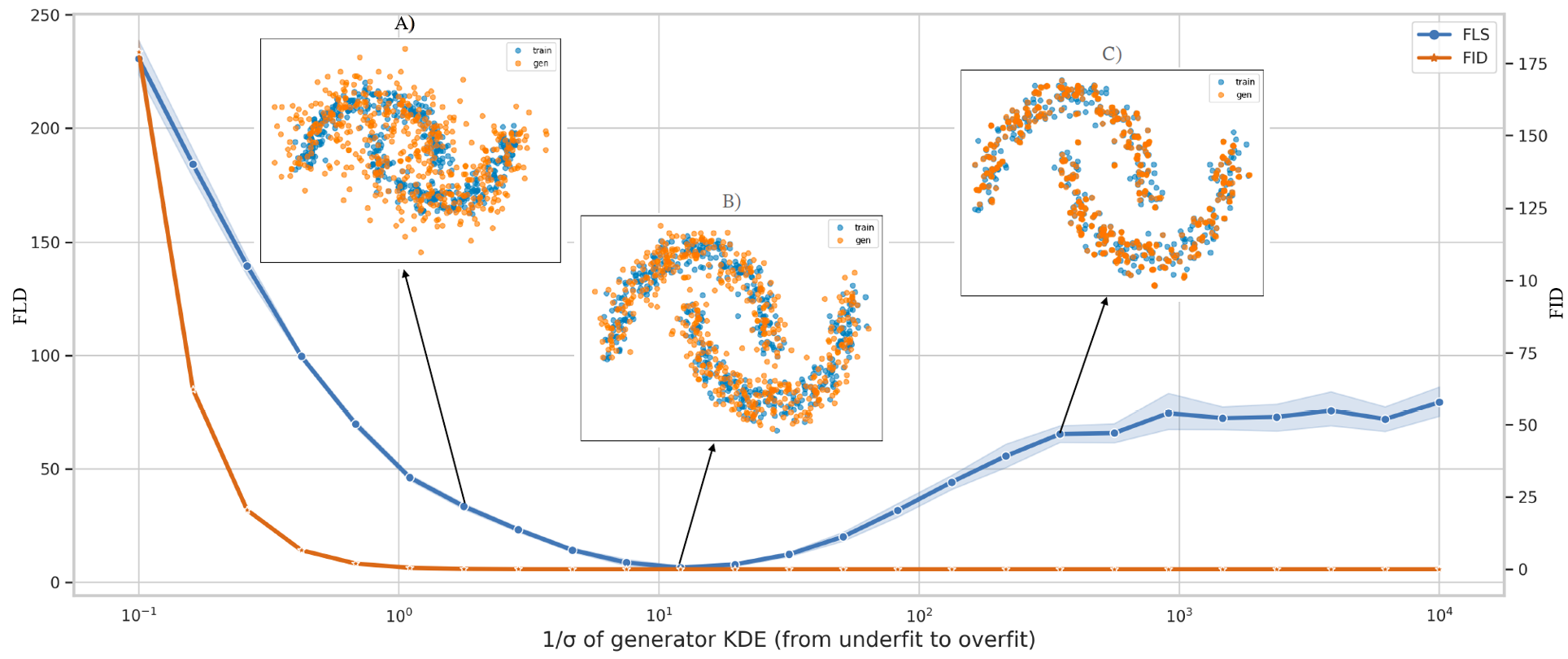}}
\caption{FLD vs. FID for samples from a KDE centered at the train points. As we decrease the bandwidth of the KDE, the generated samples go from A) (underfit e.g. low fidelity resulting in high FID/FLD) to B) (just right, the samples match the distribution but are not overfit to the train samples, yielding low FID/FLD) to C) (overfit, the samples are almost identical to the training samples, yielding low FID but \textbf{high FLD}).}
\label{fig:kde_overfit}
\end{center}
\end{figure*}



\clearpage
\section{Additional Sample Evaluations}
Below, we provide additional results of sample evaluations for FFHQ and ImageNet using both $\gO_j$ and $\gQ_j$.

\begin{figure*}[ht]
\centerline{\includegraphics[width=\textwidth]{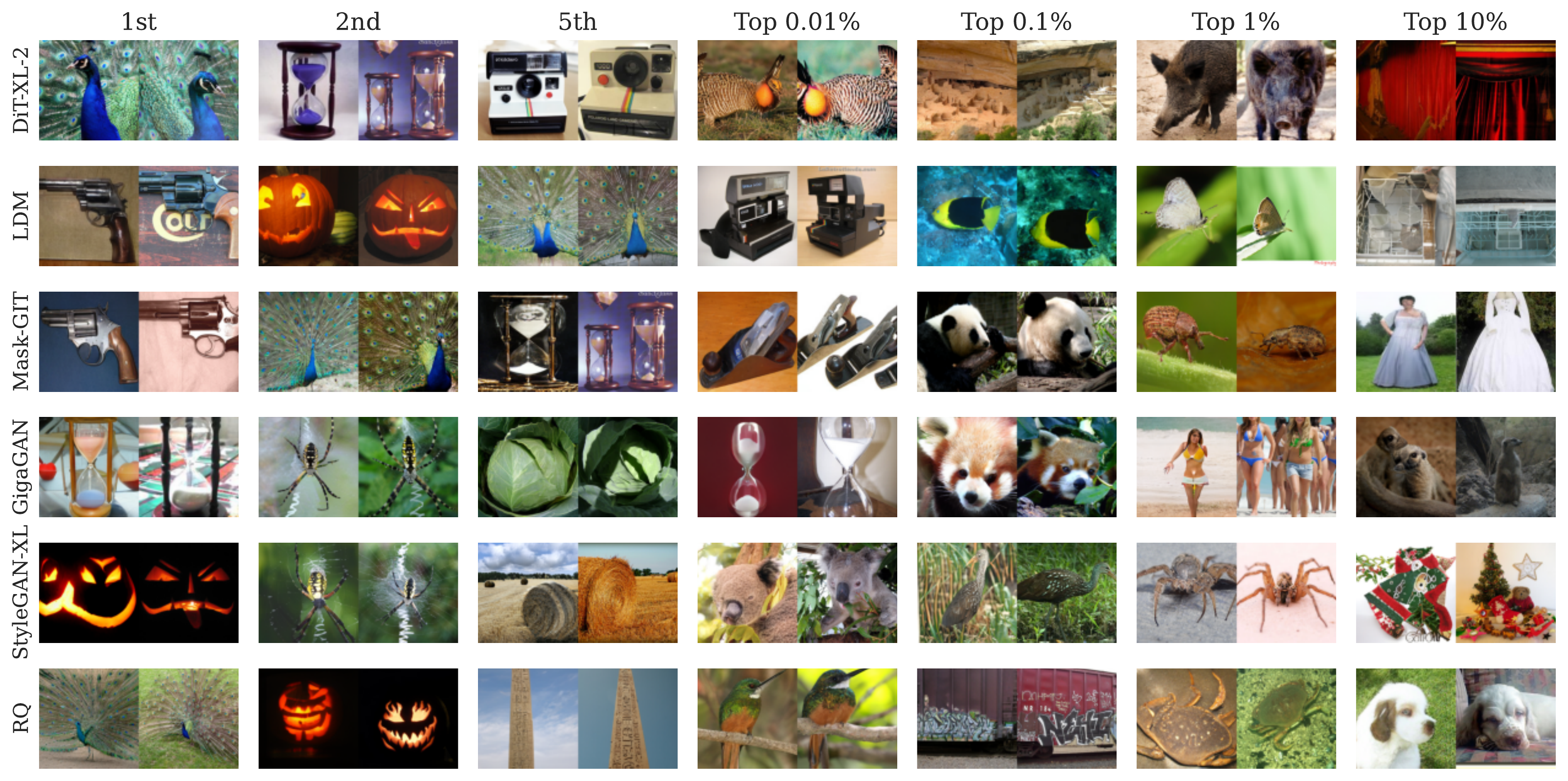}}
\caption{\small Ranked generated samples for different models on ImageNet according to $\gO_j$. The left sample is generated, the right one is the nearest sample in the train set (using distances in DINOv2 feature space).}
\label{fig:overfit_samples_imagenet}
\vspace{-10pt}
\end{figure*}

\begin{figure*}[ht]
\centerline{\includegraphics[width=\textwidth]{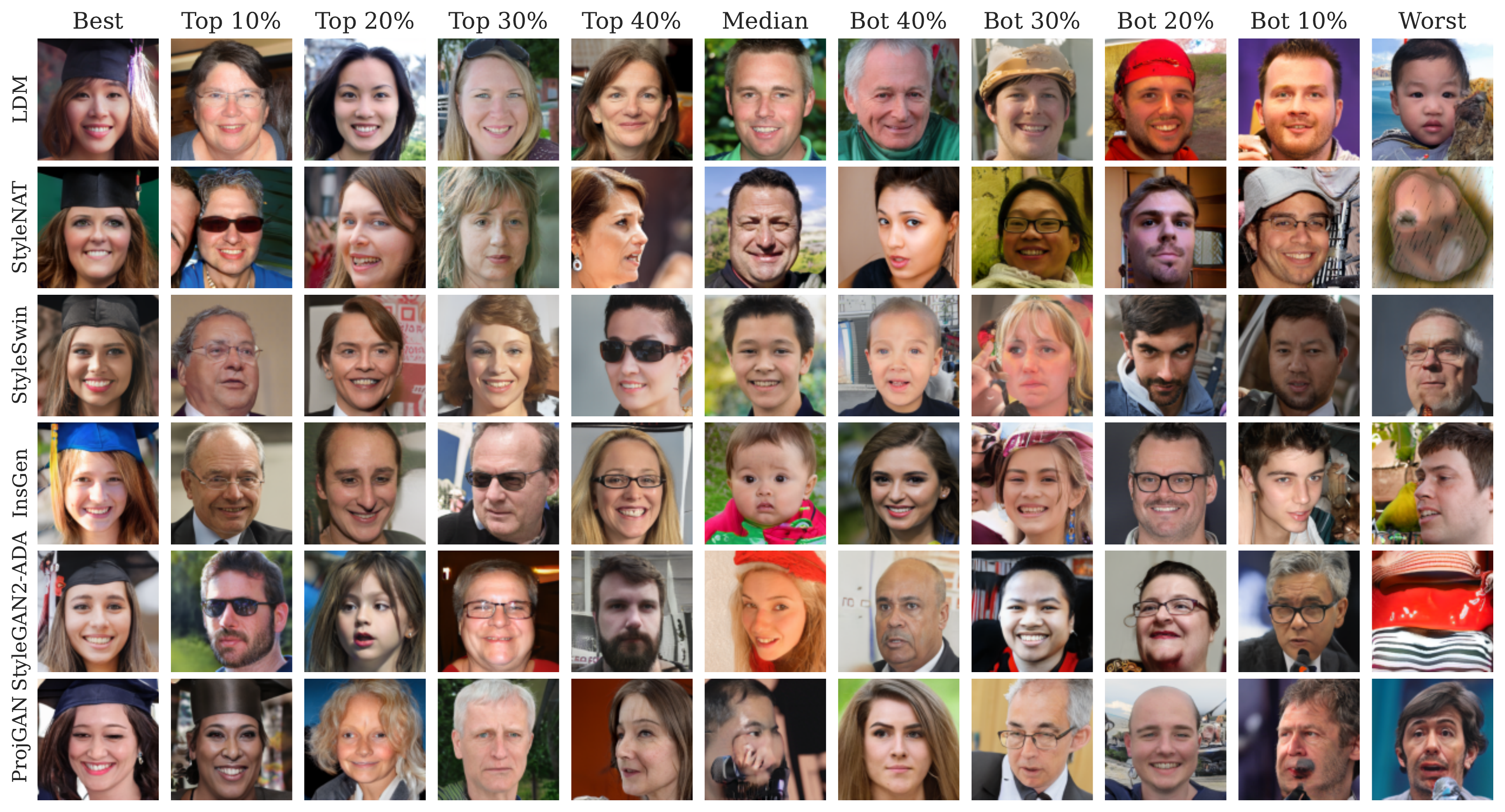}}
\caption{\small Ranked generated samples for different models on FFHQ according to $\gQ_j$.}
\label{fig:fidelity_samples_ffhq}
\vspace{-10pt}
\end{figure*}

\begin{figure*}[t]
\centerline{\includegraphics[width=\textwidth]{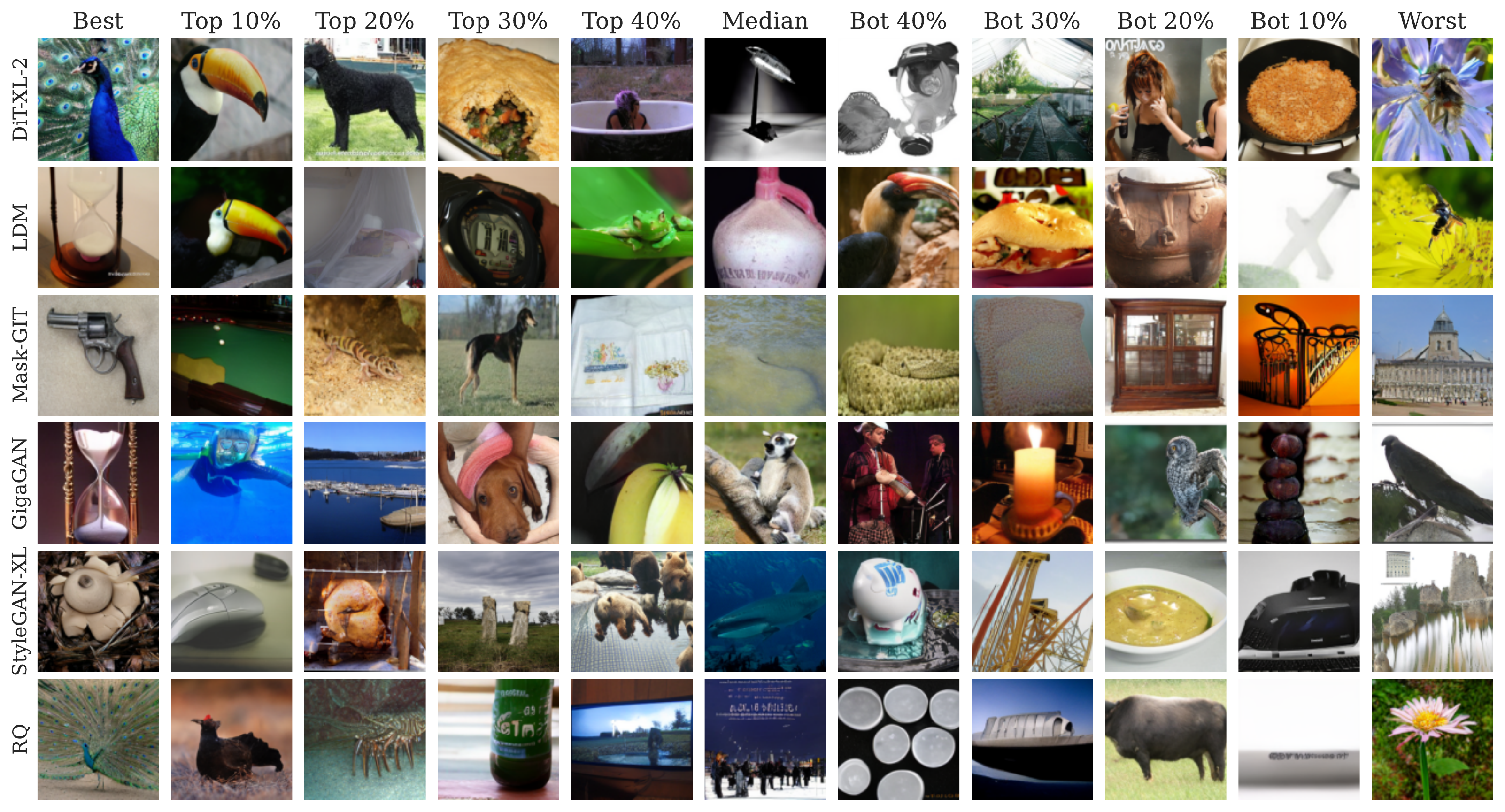}}
\caption{\small Ranked generated samples for different models on ImageNet according to $\gQ_j$.}
\label{fig:fidelity_samples_imagenet}
\vspace{-10pt}
\end{figure*}

\clearpage
\section{Inception Results}
\label{app:inception_results}
As per the recommendation of \cite{stein2023exposing}, we have updated our experiments to use the DINOv2 feature space instead of Inception-V3. We include below some of the same experimental results with all metrics using the Inception-V3 feature space.

\begin{figure*}[ht]
\centerline{\includegraphics[width=\textwidth]{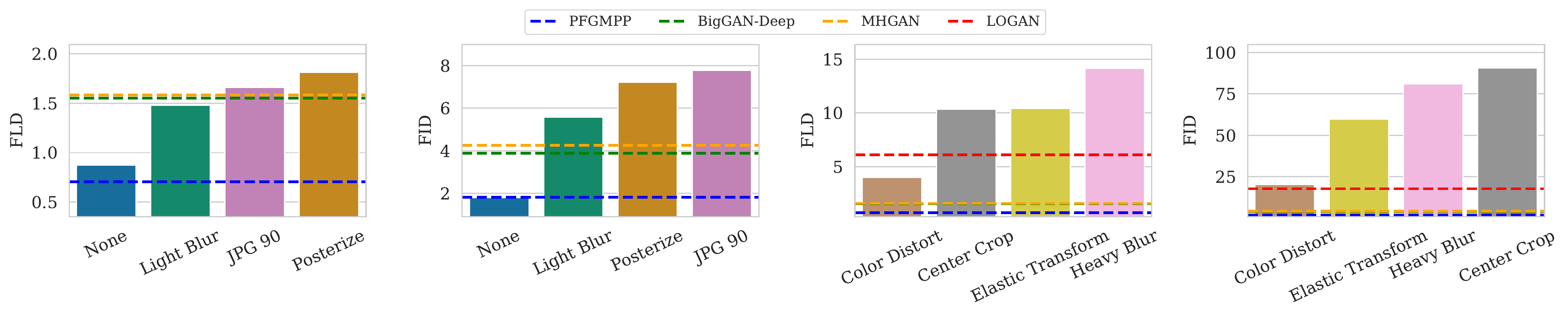}}
\vspace{-5pt}
\caption{Starting from a set of SOTA samples produced by PFGM++, we replace each sample with a transformed copy. \textbf{Left:} Effect of nearly imperceptible transformations on FLD and FID (with corresponding values for various models as reference). \textbf{Right:} Effect of large transformations on FLD and FID.}
\vspace{-15pt}
\end{figure*}

\begin{figure*}[ht]
\centerline{\includegraphics[width=\textwidth]{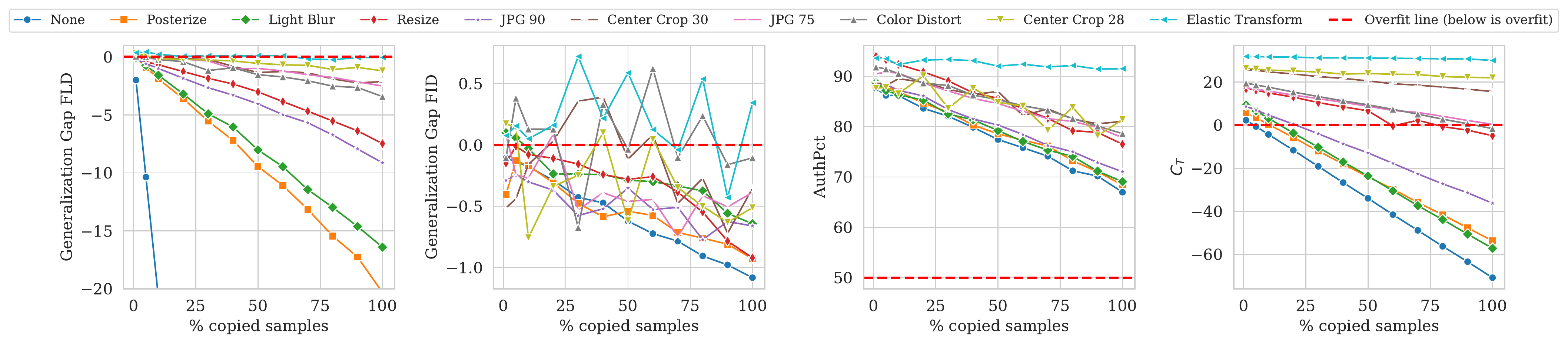}}
\caption{\small Comparison of various overfitting metrics when $\gD_{\text{gen}}$ consists of a combination of transformed generated samples and transformed copies of the train set.}
\vspace{-10pt}
\end{figure*}

\begin{table}[h]
    \centering
    \small
    \setlength{\tabcolsep}{4pt}
\begin{tabular}{c c c c c c c c c }
Dataset & Model & FLD $\downarrow$ & FID $\downarrow$ & Gen Gap FLD $\uparrow$ & $C_T$ $\uparrow$ & AuthPct $\uparrow$ & Prec. $\uparrow$ & Recall $\uparrow$ \\
    \toprule
    \multirow{8}{*}{\rotatebox[origin=c]{90}{CIFAR10}} & ACGAN-Mod & 9.85 & 35.45 & 0.31 & 48.38 & 94.50 & 0.49 & 0.14 \\
     & LOGAN & 6.07 & 17.90 & 0.32 & 42.03 & 92.72 & 0.53 & 0.52 \\
     & MHGAN & 1.63 & 4.23 & 0.20 & 8.89 & 89.39 & 0.64 & 0.52 \\
     & BigGAN-Deep & 1.53 & 3.88 & 0.14 & 5.48 & 87.35 & 0.65 & 0.51 \\
     & StyleGAN2-ada & 1.13 & 2.52 & -0.02 & 6.54 & 87.71 & 0.64 & 0.55 \\
     & iDDPM-DDIM & 1.03 & 3.30 & -0.03 & 4.00 & 87.31 & 0.68 & 0.58 \\
     & StyleGAN-XL & 0.85 & 1.88 & 0.13 & 1.80 & 86.30 & 0.67 & 0.41 \\
     & PFGMPP & 0.79 & 1.81 & 0.06 & 4.19 & 87.09 & 0.68 & 0.61 \\
    \midrule
    \multirow{9}{*}{\rotatebox[origin=c]{90}{FFHQ256}} & Efficient-VDVAE & 3.73 & 34.70 & -0.82 & 2.75 & 85.75 & 0.73 & 0.10 \\
     & Unleashing & 1.62 & 8.92 & -0.51 & 7.31 & 87.94 & 0.66 & 0.46 \\
     & Projected-GAN & 1.08 & 4.27 & -0.59 & 3.47 & 88.14 & 0.65 & 0.47 \\
     & LDM & 1.00 & 8.00 & -0.60 & 5.07 & 87.95 & 0.70 & 0.40 \\
     & StyleGAN2-ADA & 0.98 & 5.27 & -0.69 & -1.16 & 86.26 & 0.69 & 0.35 \\
     & StyleGAN-XL & 0.92 & 2.19 & -0.56 & 4.42 & 88.55 & 0.66 & 0.51 \\
     & StyleNAT & 0.86 & 2.04 & -0.80 & 3.10 & 87.57 & 0.67 & 0.54 \\
     & InsGen & 0.75 & 3.37 & -0.53 & 0.70 & 85.83 & 0.68 & 0.45 \\
     & StyleSwin & 0.67 & 2.85 & -0.50 & 3.46 & 87.62 & 0.69 & 0.47 \\
    \midrule
    \multirow{6}{*}{\rotatebox[origin=c]{90}{ImageNet256}} & RQ-Transformer & 2.79 & 9.41 & -1.18 & 8.26 & 81.78 & 0.63 & 0.56 \\
     & GigaGAN & 0.04 & 3.83 & -1.48 & -9.10 & 72.48 & 0.75 & 0.45 \\
     & StyleGAN-XL & -0.04 & 2.51 & -1.51 & -3.52 & 72.47 & 0.72 & 0.45 \\
     & Mask-GIT & -0.34 & 5.36 & -1.95 & -9.13 & 68.31 & 0.78 & 0.46 \\
     & DiT-XL-2 & -0.50 & 2.53 & -1.94 & -10.79 & 66.00 & 0.76 & 0.56 \\
     & LDM & -0.92 & 3.95 & -1.95 & -21.52 & 63.85 & 0.80 & 0.47 \\
    \midrule
\end{tabular}

\caption{Summary of performance metrics for various generative models using Inception-V3 features. Interestingly, many models obtain a better NLL than the train set on ImageNet. We hypothesize that this is due to Inception-V3 being trained on ImageNet, resulting in noticeably different activations for the train set.}

\vspace{-5pt}
\end{table}

\clearpage

\section{Method Details}
\label{app:method_details}
\subsection{Algorithm for MoG}
\label{app:algo_for_fld}
\begin{algorithm}[H]
   \small 
   \textbf{Inputs:} $ {\gD}_{\text{train}}, \gD_{\text{gen}}, \varphi, \alpha $
   \newline
   \xhdr{Train} \! \! Fit MoG on ${\gD}_{\text{train}}$ with  $\mu = \varphi(\gD_{\text{gen}})$
    \begin{algorithmic}[1]
       \STATE $\mathbf{\sigma}^2 = \mathbf{1}$ \hfill\COMMENT{// Initialize all bandwidths $\sigma^2_j = 1$}
       \STATE $ \mathbb{H} = d(\varphi({\gD}_{\text{train}}), \varphi(\gD_{\text{gen}}))$ \hfill\COMMENT{// Pre-compute distance matrix.}
       \WHILE{$\mathbf{\sigma}^2$ not converged}
       \STATE $\gL = -\log p_\sigma(\varphi(\hat{\gD}_{\text{train}}) | \varphi(\gD_{\text{gen}}))$
           \STATE $\mathbf{\sigma}^2\leftarrow \sigma^2 - \alpha \nabla_{\mathbf{\sigma}^2} \gL$ \hfill\COMMENT{// Gradient descent on NLL }
       \ENDWHILE
    \STATE \textbf{return} $\sigma$ \hfill\COMMENT{// Return Trained MoG}
    \end{algorithmic}
        \caption{ Fitting MoGs for \textsc{FLD}}
    \label{alg:FLD}
\end{algorithm}

\subsection{Optimization}
Even with the simplifying assumption of a diagonal covariance matrix, solving ~\eqref{eq:cross_val} requires learning one parameter per Gaussian, totaling $10$k parameters. As the likelihood of a train sample is the result of the $\log$ of the sum of individual Gaussians, the $\sigma_j$ cannot be optimized independently. We use batch gradient descent using the Adam optimizer with the following hyperparameters:

\begin{itemize}
    \item Number of generated samples: 10k
    \item Batch size: 10k
    \item Number of epochs: 50
    \item Learning rate ($lr$): 0.5
    \item Initial value for $\sigma_j := \frac{\min_{i} \left\lVert\varphi(\mathbf{x}^{\text{gen}}_j)-\varphi(\mathbf{x}^{\text{train}}_i)\right\rVert^2}{d}$
\end{itemize}

We modified the original version of Eq. \ref{eq:cross_val} to include a per sample term $\mathcal{L}_i$ intended to provided a minimum amount of likelihood to each train sample. In practice, we set $\mathcal{L}_i$ to be equivalent to the likelihood of the point $\varphi(x^i_{\text{train}})$ relative to a Gaussian with identity covariance existing at distance $0.9 ||\varphi(x^i_{\text{train}})||_2^2$ from $\varphi(x^i_{\text{train}})$. We find this helps the optimization process recover low variances in the edge cases where most of the generated samples are copied. Otherwise, non-copied train samples have an exponentially small likelihood assigned to them.

In addition, getting the likelihood assigned by each Gaussian to each point requires computing the distance between each pair $(\bx^{\text{gen}}_i, \bx^{\text{train}}_i)$ which is $O(n^2)$ and time-consuming. Also, as the distances and dimensions are large (at least for our high-dimensional experiments), the exponentiation and summations were often numerically unstable. To address this, we:
\begin{itemize}
    \item Compute the $O(n^2)$ distance matrix once and store it so as not to recompute it for each step of the optimization procedure.
    \item Optimize the log variances instead of the variances themselves
    \item Convert $\sigma_j^{-d}$ to $\exp (-d  \log(\sigma_j))$ to be able to take advantage of a numerically stable logsumexp. 
\end{itemize}

From plotting the loss, the variances almost always converged in short order (usually less than 10 steps). While we have no guarantee that these were global minima, when there were exact copies or close to exact copies, \ref{alg:FLD} would recover very low log variances, as shown in \ref{prop1}.

\clearpage
\section{Experimental Setup}

\subsection{Transforms}
To assess both quality and overfitting, we use standard torchvision ~\citep{torchvision2016} image transforms described below:
\begin{itemize}
    \item \textbf{Posterize}: Posterize the image to 5 bits (we found 5 was the lowest we could go while the difference still being mostly imperceptible).
    \item \textbf{Resize}: Resize the image to the size accepted by InceptionV3 using bicubic interpolation which has been shown to be problematic \citep{parmar2022aliased}.
    \item \textbf{Color Distort}: Apply the color jitter transform described in ~\citep{torchvision2016} with default parameters.
    \item \textbf{Elastic transform}: Apply the elastic transform described in ~\citep{torchvision2016} with default parameters.
    \item \textbf{Blur}: Gaussian blur with a $(5,5)$ kernel. For the light blur, we use a $\sigma$ of $0.5$. For the heavy blur we use $\sigma=1.4$. For the heatmap, the blur intensity corresponds to the value of $\sigma$
    \item \textbf{Center crop x}: Center crop the image to $(x, x)$ and fill the rest with black.
    \item \textbf{JPG x}: Convert image to a JPG of quality x.
    
\end{itemize}

\subsection{Generated samples}
\label{app:large_scale_eval}
For all experiments, we have updated our paper to use the samples provided by \cite{stein2023exposing} who have generously open-sourced an excellent set of samples from a wide variety of SOTA models on various datasets. 

\clearpage
\section{Proof of Proposition~\ref{prop1}}
We now prove our Proposition \ref{prop1} from the main paper.
\label{app:proof}
\propx*
\begin{proof}
We begin by showing that $\hat \sigma_l^2 = O(\delta_l^2)$ for $\delta_l$ small enough.

We define the variance vector $\hat \sigma$ to be the maximizer of the following equation

\begin{equation}
\hat \sigma^2 \in \arg\max_{\mathbf{\sigma^2}} 
\frac{1}{n}\sum_{i=1}^{n} \log \Bigg(
 \frac{1}{m}\sum_{j=1}^{m} 
 \frac{1}{(\sqrt{2\pi}\sigma_j)^{d}} \exp \Big({\tfrac{-\|\varphi(\bx^{\text{gen}}_j)-\varphi(\bx^{\text{train}}_i)\|^2}{2 \sigma_j^2}}\Big)\Bigg).
 \label{app:LL}
\end{equation}\

Without loss of generality, we consider the following shifted and rescaled objective function with the same argmax as~\eqref{app:LL} 
\begin{equation}
    f(\sigma) := \sum_{i=1}^{n} \log \Bigg(\frac{1}{m} \sum_{j=1}^{m} 
 \frac{1}{\sigma_j^{d}} \exp \Big({\tfrac{-D_{ij}^2}{2 \sigma_j^2}}\Big) \Bigg) \,.
\end{equation}


Then, for any $i = 1,\ldots,n$, we have
\begin{equation}
    \max_j \sigma_{j}^{-d} \exp \Big({\tfrac{-D_{ij}^2}{2 \sigma_{j}^2}}\Big) \leq \sum_{j=1}^{m} 
 \sigma_j^{-d} \exp \Big({\tfrac{-D_{ij}^2}{2 \sigma_j^2}}\Big) \leq 
m \max_j\sigma_{j}^{-d} \exp \Big({\tfrac{-D_{ij}^2}{2 \sigma_{j}^2}}\Big) 
\end{equation}
and using the fact that $\log$ is an increasing function, we have that 
for any $\sigma$
\begin{equation}
   g(\sigma) := \sum_{i=1}^n \max_{j}\left( -d \log \sigma_{j} - \frac{D_{ij}^2}{2 \sigma_{j}^2}\right) \leq f( \sigma) \leq  g(\sigma) + n \log(m)
\end{equation}

Now, let $\sigma^* \in \arg \max g(\sigma)$ and take $l \in \{1,\ldots,m\}$. We now show that there exists a $\bar{\delta}$ such that for $\delta_l < \bar{\delta}$, $\sigma_{l}^* =\frac{\delta_l}{\sqrt{d}}$ and appears exactly once in the sum $g(\sigma^*)$. 

Suppose $\sigma_{l}^*$  appears in $k$ terms (WLOG we can assume they are the first $k$ terms). We can then break down $g(\sigma^*)$ as follows,
\begin{equation}
    \label{eq:split_g}
    g_k(\sigma^*) = \sum_{i=1}^k\left( -d \log \sigma_{l}^* - \frac{D_{il}^2}{2 \sigma_{l}^{*2}}\right) + \sum_{i =k+1}^n \max_{j\neq l}\left( -d \log \sigma_{j}^* - \frac{D_{ij}^2}{2 \sigma_{j}^{*2}}\right)
\end{equation}

and define
\begin{equation}
    \label{eq:bounded_ll}
    C :=  \sup_{\sigma} \sum_{i=k+1}^n\max_{j \neq l}\Bigg| -d \log \sigma_{j} - \frac{D_{ij}^2}{2 \sigma_{j}^{2}}\Bigg| < +\infty \,.
\end{equation}

We then examine the following three cases:

\begin{itemize}
    \item $k=0$: The sum in $g(\sigma^*)$ only consists of $j \neq l$.
    \begin{equation}
        g_{0}(\sigma^*) = \sum_{i=1}^n\max_{j \neq l}\left( -d \log \sigma^*_{j} - \frac{D_{ij}^2}{2 \sigma_{j}^{*2}}\right)
    \end{equation}
    which is bounded by \eqref{eq:bounded_ll}.
    
    \item $k=1$: \eqref{eq:split_g} is maximized with respect to $\sigma_l^*$ for $\sigma_l^* = \frac{D_{1l}}{\sqrt{d}}$. Plugging in to \eqref{eq:split_g}
    \label{eq:gk1}
    \begin{equation}
        g_{1}(\sigma^*) = -d \log(D_{1l}) -\frac{d}{2} (\log(d) + 1) + \sum_{i =2}^n \max_j\left( -d \log \sigma^*_{j} - \frac{D_{ij}^2}{2 \sigma_{j}^{*2}}\right)
    \end{equation}
    Specifically, if the only term $\sigma_l$ appears in is the one with $D_{1l}=\delta_l$
    \begin{equation}
    \label{eq:ghat}
        \hat{g}_{1}(\sigma^*) = -d \log(\delta_l) -\frac{d}{2} (\log(d) + 1) + \sum_{i =2}^n \max_j\left( -d \log \sigma^*_{j} - \frac{D_{ij}^2}{2 \sigma_{j}^{*2}}\right)
    \end{equation}
    where the term $-d\log(\delta_l)$ and thus $\hat{g}_{k=1}(\sigma^*)$ can be made arbitrarily large as $\delta_l \to 0$. In contrast, if $\sigma^*_l$ appears for $D_{1l} > \delta_l$ we get a bounded $g_{1}$ (independently of $\delta_l$).
    \item $k > 1$: \eqref{eq:split_g} is maximized with respect to $\sigma_l^*$ for $\sigma_l^* = \sqrt{\frac{\sum_{i=1}^k D_{il}^2}{kd}}$. Plugging in to \eqref{eq:split_g}
    \begin{align}
        g_{k}(\sigma^*)&=\sum_{i=1}^k\left( -d \log \Bigg( \sqrt{\frac{\sum_{i=1}^k D_{il}^2}{kd}}\Bigg) - \frac{D_{il}^2}{2 \frac{\sum_{i=1}^k D_{il}^2}{kd}}\right)\\
        &=-kd \log \Bigg( 
        \sqrt{\frac{\sum_{i=1}^k D_{il}^2}{kd}} 
        \Bigg)
        - \frac{kd}{2} \\
        &\leq -kd \log \Bigg( 
        \sqrt{\frac{(k-1)}{kd}}\tilde{\delta_l} 
        \Bigg)
        - \frac{kd}{2} 
    \end{align}    
    and is thus bounded independently of $\delta_l$.
\end{itemize}

Thus, we conclude that, for $\delta_l$ small enough, $\sigma_l^* = \frac{\delta_l}{\sqrt{d}}$ since $\hat{g}_{1}(\sigma^*) > \{ g_{0}(\sigma^*), g_{1}(\sigma^*),g_{k}(\sigma^*)\}$.

Now, consider $\tilde \sigma$ such that $\tilde \sigma_l = \sigma_l^* / \sqrt{1-c}$ and $\tilde \sigma_{j} = \sigma_{j}^*$ otherwise. We now look at $g (\tilde \sigma)$. From the above
\begin{equation}
    g(\tilde \sigma) = g(\sigma^*) + \frac{d}{2}( \log(1-c) + c)
\end{equation}
as long as $\tilde \sigma_l$ is found in a single term of the summation in $g(\tilde \sigma)$ (i.e. $k=1$). If not, with the same argument as above, $g(\tilde \sigma)$ is bounded above by a constant $U$ independent of $\delta_l$ and $c$. Thus $\exists 0<\tilde{\delta}< \bar \delta$ such that $\forall \delta_l < \tilde{\delta}$, we have $U+n \log m \leq g(\sigma^*) $. Thus, in both cases $\forall \delta_l < \tilde{\delta}$ and for all $c$ such that $\frac{d}{2}(\log(1-c) + c) < -n \log m$, we have

\begin{align}
f(\tilde \sigma) &\leq g(\tilde \sigma) + n \log m \\
&\leq \max[U,g(\sigma^* ) + \frac{d}{2}(\log(1-c) + c)]  + n \log m\\
&< \max[U,g(\sigma^* ) -n \log m] + n \log m \\
&\leq g(\sigma^*)\\
&\leq f(\hat \sigma).    
\end{align}

Since $\frac{d}{2}(\log(1-c) + c)$ is decreasing in $c$ (for $c >0$) and $\tilde \sigma_l$ is increasing in $c$, we must have $\hat \sigma < \tilde \sigma_j = \frac{\delta_j}{d\sqrt{1-c}}$ where $1-c = \exp(-\frac{2n + 1}{d} \log(m) )$. Thus, $\hat \sigma  = O(\delta_j)$ for the case $\delta_l < \tilde \delta$.

For the case $\delta_l \geq \bar \delta$, we first show that $\hat{\sigma}_l$ is bounded ($\hat{\sigma}_l \leq M$). Denote the individual likelihood terms of $f(\sigma)$ as follows

\begin{equation}
    L_{ij}(\sigma_j) := \frac{1}{\sigma_j^{d}} \exp \Big({\tfrac{-D_{ij}^2}{2 \sigma_j^2}}\Big).
\end{equation}
Then $\forall i$, we have that
\begin{equation}
    L_{il}(\sigma_l) \leq \frac{1}{\sigma_l^{d}} \underset{\sigma_l\to \infty}{\to} 0
\end{equation}
Thus, since $f(\sigma)$ is increasing in the $L_{ij}(\sigma)$, it suffices to show there is $\sigma_l \leq M$ such that all $L_{il}(\sigma_l)$ are positive. Taking $\sigma_l = 1$, we have that $\forall i: L_{il} (1) \geq e^{-\bar{D}^2/2}> 0$. Thus we can take $M = e^{\frac{D^2}{2d}}$.
Hence, for the case $\delta_l \geq \tilde \delta$

\begin{align*}
    \hat{\sigma_l} &\leq \frac{\delta_l}{\bar \delta}\hat{\sigma_l}\\
    &\leq \frac{M}{\bar \delta}\delta_l.
\end{align*}
Finally $\hat{\sigma_l}$ is $O(\delta_l)$ by taking $C = \max \{ \frac{1}{d \sqrt{1-c}},\frac{M}{\bar{\delta}}\}$.

\end{proof}



\clearpage

\section{Feature spaces}
Feature spaces play a very important role in our metric and similar metrics (FID, Precision/Recall, etc.). While those produced by InceptionV3 have proved useful, there is work illustrating some potential issues, motivating our switch to DINOv2. As such, while the resulting feature psace has proved adequate for our experiments, it is likely that FLD could provide an even better assessment of generated samples with better feature embeddings. Additionally, while this work has solely focused on the evaluation of generative models in the image domain, there is nothing about FLD indicating it could not be applied to other domains (using appropriate feature embeddings) and we leave this as a fruitful future area of research. 

\section{Broader Impact}
\label{app:broder_impact}
Generative modeling has the potential to create copies of people's identities or artistic styles without their consent, which can have negative impacts on individuals and communities.
While evaluation metrics can ensure the generalizability of generative models, they should not be employed to justify copying an artist's style without their permission. The concept of copyrighting an artistic style is intricate, and while some may argue its challenges and ambiguities, it is vital to acknowledge the impact on the artist whose style is being reproduced. Respecting the rights of individuals and communities whose data is used in creating generative models is crucial to avoid harm or exploitation. As the field of generative modeling progresses, it is imperative to engage in ongoing discussions regarding the ethical implications of these technologies and establish guidelines for their responsible application. Additionally, if evaluation metrics fail to consider the generation of harmful or unethical content, negative societal consequences can emerge. Therefore, it is crucial to consider a diverse range of evaluation metrics that encompass various aspects of model performance while aligning with ethical and societal considerations.




\end{document}